\newcommand{\ignore}[1]{}
\newtheorem{theorem}{Theorem}
\newtheorem{proposition}[theorem]{Proposition}
\theoremstyle{plain}
\title{Wasserstein Robust Reinforcement Learning}
\author{
  Mohammed Amin Abdullah\footnotemark[1]\thanks{The first three authors are to be considered as joint first co-authors} \\
  Huawei R\&D UK \\
  \texttt{mohammed.abdullah@huawei.com } \\ \And 
  Hang Ren \footnotemark[1]\footnotemark[2] \\
  Huawei R\&D UK \\ Imperial College London \\
  \texttt{hang.ren@huawei.com} \\ \And 
  Haitham Bou-Ammar \footnotemark[1] \thanks{Honorary Lecturer Position at University College London.}\\
  Huawei R\&D UK \\  University College London \\
  \texttt{haitham.bouammar@huawei.com}   \\
  \AND 
  Vladimir Milenkovi\' c\footnotemark[2]\thanks{Work done while interning at the Reinforcement Learning Team in Huawei Technologies Research and Development in London.} \\
 Huawei R\&D UK \\ University of Cambridge \\
   \And
  Rui Luo \footnotemark[2]\\
  Huawei R\&D UK \\ University College London \\ 
  \And 
  Mingtian Zhang\footnotemark[2] \\
  Huawei R\&D UK \\ University College London \\ 
   \And
  Jun Wang \\
  Huawei R\&D UK \\ University College London \\ 
  }
\begin{document}

\maketitle

\begin{abstract}
  Reinforcement learning algorithms, though successful, tend to over-fit to training environments hampering their application to the real-world. This paper proposes $\text{W}\text{R}^{2}\text{L}$ -- a robust reinforcement learning algorithm with significant robust performance on low and high-dimensional control tasks. Our method formalises robust reinforcement learning as a novel min-max game with a Wasserstein constraint for a correct and convergent solver. Apart from the formulation, we also propose an efficient and scalable solver following a novel zero-order optimisation method that we believe can be useful to numerical optimisation in general. 
  We empirically demonstrate significant gains compared to standard and robust state-of-the-art algorithms on high-dimensional MuJuCo environments. 
\end{abstract}

\section{Introduction}

Reinforcement learning (RL) has become a standard tool for solving decision-making problems with minimal feedback. Applications with these characteristics are ubiquitous, including, but not limited to, computer games \citep{Silver}, robotics \citep{KoberPM2012, Deisenroth:2013:SPS:2688186.2688187, Ammar:2014:OML:3044805.3045027}, finance \citep{Fischer2018Reinforcement}, and personalised medicine \citep{make1010009}. Although significant progress has been made on developing algorithms for learning large-scale and high-dimensional reinforcement learning tasks, these algorithms often over-fit to training environments and fail to generalise across even slight variations of transition dynamics \citep{DBLP:journals/corr/abs-1810-12282, DBLP:journals/corr/abs-1902-07015}.   

Robustness to changes in transition dynamics, however, is a crucial component for adaptive and safe RL in real-world environments. To illustrate, consider a self-driving car scenario in which we attempt to design an agent capable of driving a vehicle smoothly, safely, and autonomously. A typical reinforcement learning work-flow to solving such a problem consists of building a simulator to emulate real-world scenarios, training in simulation, and then transferring resultant policies to physical systems for control. Unfortunately, such a strategy is easily prone to failure as designing accurate simulators that capture intricate complexities of large cities is extremely challenging. Rather than learning in simulation, another work-flow might consist of constructing a pipeline to directly learn on the hardware system itself. Apart from memory constraints, state-of-the-art reinforcement learning algorithms exhaust hundreds to millions of agent-environment interactions before acquiring successful behaviour. Of course, such high demands on sample complexities prohibit the direct application of learning algorithms on real-systems, leaving robustness to misspecified simulators a largely unresolved problem. 

Motivated by real-world applications, recent literature in reinforcement learning has focused on the above problems, proposing a plethora of algorithms for robust decision-making~\citep{Morimoto:2005:RRL:1119345.1119349, pmlr-v70-pinto17a, pmlr-v97-tessler19a}. Most of these techniques borrow from game theory to analyse, typically in a discrete state and actions spaces, worst-case deviations of agents' policies and/or environments, see \cite{RePEc:aea:aecrev:v:91:y:2001:i:2:p:60-66, nilim2005robust, iyengar2005robust, Namkoong:2016:SGM:3157096.3157344} and references therein. These methods have also been extended to linear function approximators~\cite[]{NIPS2015_6014}, and deep neural networks~\cite[]{DBLP:journals/corr/abs-1710-06537} showing (modest) improvements in performance gain across a variety of disturbances, e.g., action uncertainties, or dynamical model variations. 

Though successful in practice, current techniques to robust decision making remain task-specific, and there are  two major lingering drawbacks. First, these algorithms fail to provide a general robustness framework due to their specialised heuristics. In particular, algorithms designed for discrete state-action spaces fail to generalise to continuous domains and vice versa. This is due to their exploiting mathematical properties valid only for discrete state and action spaces, or for convex-concave functions, e.g., $\min \max f(\cdot) = \max \min f(\cdot)$. Clearly, such mathematical derivations are only loosely related to implementation, and further insights can be gathered with more rigorous attempts tackling the continuous problem itself. Apart from theoretical limitations, the second drawback of current approaches can be traced back to the process by which empirical validation is conducted. Rarely do the papers presenting these techniques compare gains against other robust algorithms in literature. In fact, focus is mainly diverted to outperforming state-of-the-art reinforcement learning -- a criterion easy to satisfy as algorithms from RL were never trained for robustness. Interestingly, upon careful evaluation, we came to realise that robust adversarial reinforcement learning of \cite{pmlr-v70-pinto17a}, for example, is superior to some of the more recently published works attempting to solve the same problem, see Section \ref{Sec:Exps}.

In this paper, we contribute to the above endeavour to solve the robustness problem in RL by proposing a generic framework for robust reinforcement learning that can cope with both discrete and continuous state and actions spaces. The algorithm we introduce, which we call \emph{Wasserstein Robust Reinforcement Learning} (WR$^2$L), Algorithm \ref{Algo:Main}, aims to find the best policy, where any given policy is judged by the worst-case dynamics amongst all candidate dynamics in a certain set. This set is essentially the average Wasserstein ball around a reference dynamics $\mathcal{P}_0$. The constraints makes the problem well-defined, as searching over arbitrary dynamics can only result in non-performing system. The measure of performance is the standard RL objective form, the expected return. Both the policy and the dynamics are parameterised; the policy parameters $\bm{\theta}_k$ may be the weights of a deep neural network, and the dynamics parameters $\bm{\phi}_j$ the parameters of a simulator or differential equation solver. The algorithm performs estimated descent steps in $\bm{\phi}$ space and - after (almost) convergence - performs an update of policy parameters, i.e., in $\bm{\theta}$ space. Since $\bm{\phi}_j$ may be high-dimensional, we adapt a zero'th order sampling method based extending \cite{salimans2017evolution} to make estimations of gradients, and in order to define the constraint set which $\bm{\phi}_j$ is bounded by, we generalise the technique to estimate Hessians (Proposition \ref{Hess_prop}). 

We emphasise that although access to a simulator with parameterisable dynamics are required, the actual reference dynamics $\mathcal{P}_0$ need not be known explicitly nor learnt by our algorithm. Put another way, we are in the ``RL setting'', not the ``MDP setting'' where the transition probability matrix is known \emph{a priori}. The difference is made obvious, for example, in the fact that we cannot perform dynamic programming, and the determination of a particular probability transition can only be estimated from sampling, not retrieved explicitly. Hence, our algorithm is not model-based in the traditional sense of learning a model to perform planning.

We believe our contribution is useful and novel for two main reasons. Firstly, our framing of the robust learning problem is in terms of dynamics uncertainty sets defined by Wasserstein distance. Whilst we are not the first to introduce the Wasserstein distance into the context of MDPs (see, e.g., \cite{yang2017convex} or \cite{lecarpentier2019non}), we believe our formulation is amongst the first suitable for application to the demanding application-space we desire, that being, high-dimensional, continuous state and action spaces.
Secondly, we believe our solution approach is both novel and effective (as evidenced by experiments below, see Section \ref{Sec:Exps}), and does not place a great demand on model or domain knowledge, merely access to a simulator or differentiable equation solver that allows for the parameterisation of dynamics. Furthermore, it is not computationally demanding, in particular, because it does not attempt to build a model of the dynamics, and operations involving matrices are efficiently executable using the Jacobian-vector product facility of automatic differentiation engines.

The rest of the paper is organised as follows. In the next section, we provide a background on notation and an overview of Wasserstein distance in its various forms. The problem formulation and main algorithm is then presented in Section \ref{WRRL_sec}.  In Section \ref{Sec:Implementation}, we describe the zero'th order method we use to estimate the Hessian matrix used by the algorithm, and the proof of its correctness is given.  In Section \ref{Sec:Related_work} we survey related literature. Experiments and results are given in Section \ref{Sec:Exps}, and finally, conclusions and future work are discussed in Section \ref{Sec:Conc_Ftr_work}.

\section{Background}
This section provides background material needed for the remainder of the paper. We first describe the reinforcement learning framework adopted in this paper, and then proceed to detail notions from Wasserstein distances needed to constrain our learning objective. 
\subsection{Reinforcement Learning}\label{Sec:RL}
In reinforcement learning, an agent interacts with an unknown and stochastic environment with the goal of maximising a notion of return \citep{Sutton:1998:IRL:551283, Jan, Busoniu:2010:RLD:1855346}. These problems are typically formalised as Markov decision processes (MDPs)\footnote{Please note that we present reinforcement learning with continuous states and actions. This allows us to easily draw similarities to optimal control as detailed later. Extending these notions to discrete settings is relatively straight-forward.} $\mathcal{M} = \left\langle \mathcal{S}, \mathcal{A}, \mathcal{P}, \mathcal{R}, \gamma \right\rangle$, where $\mathcal{S} \subseteq \mathbb{R}^{d}$ denotes the state space, $\mathcal{A} \subseteq \mathbb{R}^{n}$ the action space, $\mathcal{P}: \mathcal{S} \times \mathcal{A} \times \mathcal{S} \rightarrow [0, 1]$ 
is a state transition probability describing the system’s dynamics, $\mathcal{R}: \mathcal{S} \times \mathcal{A} \rightarrow \mathbb{R}$ is the reward function measuring the agent's performance, and $\gamma \in [0,1)$ specifies the degree to which rewards are discounted over time. 

At each time step $t$, the agent is in state $\bm{s}_{t} \in \mathcal{S}$ and must choose an action $\bm{a}_{t} \in \mathcal{A}$, transitioning it to a new state $\bm{s}_{t+1} \sim \mathcal{P}\left(\bm{s}_{t+1}|\bm{s}_{t}, \bm{a}_{t}\right)$, and yielding a reward $\mathcal{R}(\bm{s}_{t}, \bm{a}_{t})$. A policy $\pi: \mathcal{S} \times {A} \rightarrow [0,1]$ is defined as a probability distribution over state-action pairs, where $\pi(\bm{a}_{t}|\bm{s}_{t})$ represents the density of selecting action $\bm{a}_{t}$ in state $\bm{s}_{t}$. Upon consequent interactions with the environment, the agent collects a trajectory $\bm{\tau}$ of state-action pairs. The goal is to determine an optimal policy $\pi^{\star}$ by solving: 
\begin{equation}
\label{Eq:RLObjective}
    \pi^{\star} = \arg\max_{\pi} \mathbb{E}_{\bm{\tau} \sim p_{\pi}(\bm{\tau})}\left[\mathcal{R}_{\text{Total}}(\bm{\tau})\right],
\end{equation}
where $p_{\pi}(\bm{\tau})$ denotes the trajectory density function, and $\mathcal{R}_{\text{Total}}(\bm{\tau})$ the return, that is, the total accumulated reward: 
\begin{equation}
\label{Eq:Traj}
    p_{\pi}(\bm{\tau}) = \mu_{0}(\bm{s}_{0})\pi(\bm{a}_{0}|\bm{s}_{0})\prod_{t=1}^{T-1}\mathcal{P}(\bm{s}_{t+1}|\bm{s}_{t}, \bm{a}_{t})\pi(\bm{a}_{t}|\bm{s}_{t}) \ \ \text{and} \ \  \ \mathcal{R}_{\text{Total}}(\bm{\tau}) = \sum_{t=0}^{T-1}\gamma^{t}\mathcal{R}(\bm{s}_{t}, \bm{a}_{t}),
\end{equation}
with $\mu_{0}(\cdot)$ denoting the initial state distribution.

\subsection{Wasserstein Distance}
In our problem definition, we make use of a distance measure to bound allowed variations from a reference transition density $\mathcal{P}_{0} (\cdot)$. In general, a number of common metrics for measuring closeness between two probability distributions exist. Examples of which are total variation distance and Kullback-Leibler divergence. In this paper, however, we measure distance between two different dynamics by the \emph{Wasserstein distance}. This has a number of desirable properties; firstly, it is a genuine distance, exhibiting symmetry, which is a property that K-L divergence lacks. Secondly, it is very flexible in the forms of the distributions that can be compared; it can measure the distance between two discrete distributions, two continuous distributions, and a discrete and continuous distribution (this latter case implying another valuable advantage -  that the supports of the distributions can be different). In all cases, the Wasserstein distance is well-defined. Finally, and perhaps most importantly, the Wasserstein distance takes into account the underlying geometry of the space the distributions are defined on, which could be information that is fruitful to exploit in learning optimal control. Indeed this last point is the core motivator for our choosing Wasserstein distance for our algorithm, as shall be explained later. 

\paragraph{Definition:} Given a measurable space $(\mathcal{X}, \mathcal{F})$ with $\mathcal{X}$ being a metric space, a pair of discrete measures $\mu, \nu$ defined on this measurable space can be written as $\mu=\sum_{i=1}^n \mu_i\delta_{x_i}$ and $\nu=\sum_{j=1}^m\nu_j\delta_{y_j}$ where all $x_i, y_j \in \mathcal{X}$. A coupling $\kappa(\cdot, \cdot)$ of $\mu$ and $\nu$ is a measure over $\{x_1, \ldots, x_n\} \times \{y_1, \ldots y_m\}$ that preserves marginals, i.e, $\mu_i=\sum_j\kappa(\mu_i, \nu_j)$ $\forall i$ and $\nu_j = \sum_{i}\kappa(\mu_i, \nu_j)$ $\forall j$. This then induces a cost of ``moving'' the mass of $\mu$ to $\nu$, given as the (Frobenius) inner product $\langle \kappa, C \rangle$ where the matrix $C \in \mathbb{R}^{n \times m}$ has $[C]_{ij}=c_{ij}=d(x_i, y_j)$, i.e., the cost of moving a unit of measure from $x_i$ to $y_j$. Minimised over the space of all couplings $\mathbf{K}(\mu, \nu)$, we get the Wasserstein distance, also known as the \emph{Earth-Mover Distance} (EMD). 

More generally, let $\mathcal{X}$ be a metric space with metric $d(\cdot, \cdot)$. Let $\mathcal{C}(\mathcal{X})$ be the space of continuous functions on $\mathcal{X}$ and let $\mathcal{M}(\mathcal{X})$ be the set of probability measures on $\mathcal{X}$. Let $\mu, \nu \in \mathcal{M}(\mathcal{X})$. Let $\mathbf{K}(\mu, \nu)$ be the set of couplings between $\mu, \nu$:
\begin{equation}
\mathbf{K}(\mu, \nu) := \{\kappa \in \mathcal{M}(\mathcal{X}\times \mathcal{X})\, ; \, \forall (A, B) \subset \mathcal{X}\times \mathcal{X}, \kappa (A \times \mathcal{X}) = \mu(A), \kappa(\mathcal{X} \times B) = \nu(B)\}
\label{eq:}
\end{equation}

That is, the set of joint distributions $\kappa \in \mathcal{M}(\mathcal{X}\times \mathcal{X})$ whose marginals agree with $\mu$ and $\nu$ respectively.
Given a metric (serving as a cost function) $d(\cdot,\cdot)$ for $\mathcal{X}$, the $p$'th Wasserstein distance $W_p(\mu, \nu)$ for $p \geq 1$ between $\mu$ and $\nu$ is defined as:
\begin{equation}\label{eq::smoothed_wass}
W_p(\mu, \nu):= \left(\min_{\kappa \in \mathbf{K}(\mu, \nu)} \int_{\mathcal{X} \times \mathcal{Y}} d(x,y)^pd\kappa(x,y)\right)^{1/p}
\end{equation}

\section{Wasserstein Robust Reinforcement Learning}\label{WRRL_sec}

This section formalises robust reinforcement learning by equipping agents with capabilities of determining well-behaved policies under worst-case models which are bounded in $\epsilon$-Wasserstein balls. Our motivation for formalising $\text{W}\text{R}^{2}\text{L}$ is rooted in robust optimal control -- a field dedicated to determining optimal action-selection rules under uncertainties induced by modelling assumptions. Here an agent controlling a plant/system faces an adversary that optimises for a disturbance controller while aiming at minimising rewards received by the agent. Interestingly, these problems relate to two-player min-max games and provide a rich literature with efficient solutions in certain specific scenarios, e.g., discrete states and actions, robust linear quadratic regulators, among others. Though generic min-max objectives can lead to robustness (see \cite{pmlr-v70-pinto17a}), typical algorithms rooted in game-theory optimise well-behaved objectives by introducing additional structural assumptions to adversaries. For example, it is not uncommon in robust optimal control to assume the process by which disturbances are applied (e.g., additive, multiplicative), or to only consider a subset adhering to maximally bounded norms \footnote{Please note that this is not to say that robust optimal control is restricted to the above settings, see \cite{doyle2013feedback}.}.         

Starting from robust optimal control, we derive $\text{W}\text{R}^{2}\text{L}$'s objective by introducing two major refinements to standard reinforcement learning. Similar to robust control, we enable agents to perturb transition models from a reference simulator with the goal of determining worst-case scenarios. We do not, however, pose additional structural assumptions on the process by which adversaries apply these perturbations. Rather, we posit a parameterised class of disturbances and adopt zero'th order optimisation (see Section~\ref{Sec:Implementation}) for flexibility, thus broadening our application spectrum to high-dimensional and stochastic dynamical systems. Optimisation problems of this nature, on the other hand, tend to be ill-specified due to the unconstrained process by which models are fit. To bound allowed variations in transition models, and ensure correctness and tractability, we then introduce an $\epsilon$-ball Wasserstein constraint around a simulator $\mathcal{P}_{0}(\cdot)$ to guarantee convergence. 

Before continuing, however, it is worth revisiting the motivations for choosing such a distance. Per the definition, constraining the possible dynamics to be within an $\epsilon$-Wasserstein ball of a reference dynamics $\mathcal{P}_{0}(\cdot)$ means constraining it in a certain way. Wasserstein distance has the form $\text{mass} \times \text{distance}$. If this quantity is constrained to be less than a constant $\epsilon$, then if the mass is large, the distance is small, and if the distance is large, the mass is small. Intuitively, when modelling the dynamics of a system, it may be reasonable to concede that there could be a systemic error - or bias - in the model, but that bias should not be too large. It is also reasonable to suppose that occasionally, the behaviour of the system may be wildly different to the model, but that this should be a low-probability event. If the model is frequently wrong by a large amount, then there is no use in it. In a sense, the Wasserstein ball formalises these assumptions.

In what comes next, we detail the problem definition as a novel min-max game with Wasserstein constrained dynamics. In Section~\ref{Sec:Sol}, we elaborate a generic algorithm capable of robustly updating both model and policy parameters.   

\subsection{Problem Definition: Robust Objectives and Constraints}\label{Sec:ProbDefObj}
As mentioned earlier, the problem definition we introduce in this paper extends reinforcement learning in two directions. In the first, we introduce a min-max objective with parameterised transition models, while in the second, we incorporate Wasserstein constraints to bound allowed perturbations. 

\paragraph{Parameterising Policies and Transition Models:} Due to the continuous nature of the state and action spaces considered in this work, we resort to deep neural networks to parameterise policies, which we write as $\pi_{\bm{\theta}}(\bm{a}_{t}|\bm{s}_{t})$, where $\bm{\theta} \in \mathbb{R}^{d_{1}}$ is a set of tunable hyper-parameters to optimise. For instance, these policies can correspond to multi-layer perceptrons for MuJoCo environments, or to convolutional neural networks in case of high-dimensional states depicted as images. Exact policy details are ultimately application dependent and, consequently, provided in the relevant experiment sections. 

In principle, one can similarly parameterise transition models using deep networks (e.g., LSTM-type models) to provide one or more action-conditioned future state predictions. Though appealing, going down this path led us to agents that discover worst-case transition models which minimise training data but lack any valid physical meaning. For instance, original experiments we conducted on CartPole ended up involving transitions that alter angles without any change in angular velocities. More importantly, these effects became more apparent in high-dimensional settings where the number of potential minimisers increases significantly. It is worth noting that we are not the first to realise such an artifact when attempting to model physics-based dynamics using deep networks. Authors in~\citep{Jan2} remedy these problems by introducing Lagrangian mechanics to deep networks, while others~\citep{DBLP:journals/corr/abs-1803-08287, NIPS2018_7892} argue the need to model dynamics given by differential equation structures directly. 

Though incorporating physics-based priors to deep networks is an important and challenging task that holds the promise of scaling model-based reinforcement learning for efficient solvers, in this paper we rather study an alternative direction focusing on perturbing  differential equation solvers and/or simulators with respect to the dynamic specification parameters $\bm{\phi} \in \mathbb{R}^{d_{2}}$. Not only would such a consideration reduce the dimensionality of parameter spaces representing transition models, but would also guarantee valid dynamics due to the nature of the discrete differential equation solver. Though tackling some of the above problems, such a direction arrives with a new set of challenges related to computing gradients and Hessians of black-box solvers. In Section~\ref{Sec:Implementation}, we develop an efficient and scalable zero-order method for valid and accurate model updates. 

\paragraph{Unconstrained Loss Function:} Having equipped agents with the capability of representing policies and perturbing transition models, we are now ready to present an unconstrained version of $\text{W}\text{R}^{2}\text{L}$'s loss function. Borrowing from robust optimal control, we define robust reinforcement learning as an algorithm that learns best-case policies under worst-case transitions: 
\begin{equation}
\label{Eq:LossRobust}
    \max_{{\bm{\theta}}} \left[\min_{{\bm{\phi}}} \mathbb{E}_{\bm{\tau} \sim p_{\bm{\theta}}^{\bm{\phi}}(\bm{\tau})}\left[\mathcal{R}_{\text{total}}(\bm{\tau})\right]\right],
\end{equation}
where $p_{\bm{\theta}}^{\bm{\phi}}(\bm{\tau})$ is a trajectory density function parameterised by both policies and transition models, i.e., $\bm{\theta}$ and $\bm{\phi}$, respectively:
\begin{equation*}
    p_{\bm{\theta}}^{\bm{\phi}}(\bm{\tau}) = \mu_{0}(\bm{s}_{0})\pi(\bm{a}_{0}|\bm{s}_{0})\prod_{t=1}^{T-1}\underbrace{\mathcal{P}_{\bm{\phi}}(\bm{s}_{t+1}|\bm{s}_{t}, \bm{a}_{t})}_{\text{specs vector and diff. solver}}\underbrace{\pi_{\bm{\theta}}(\bm{a}_{t}|\bm{s}_{t})}_{\text{ \ \ deep network}}.
\end{equation*}
At this stage, it should be clear that our formulation, though inspired from robust optimal control, is, truthfully, more generic as it allows for parameterised classes of transition models without incorporating additional restrictions on the structure or the scope by which variations are executed\footnote{Of course, allowed perturbations are ultimately constrained by the hypothesis space. Even then, our model is more general compared to robust optimal control that assumes additive, multiplicative, or other forms of disturbances.}. 

\paragraph{Constraints \& Complete Problem Definition:}
Clearly, the problem in Equation \ref{Eq:LossRobust} is ill-defined due to the arbitrary class of parameterised transitions. To ensure well-behaved optimisation objectives, we next introduce constraints to bound search spaces and ensure convergence to feasible transition models. For a valid constraint set, our method assumes access to samples from a \emph{reference dynamics model} $\mathcal{P}_{0}(\cdot|\bm{s}, \bm{a})$, and bounds learnt transitions in an $\epsilon$-Wasserstein ball around $\mathcal{P}_{0}(\cdot|\bm{s}, \bm{a})$, i.e., the set defined as: 
\begin{equation}
\label{Eq:ConstraintSetOne}
    \mathcal{W}_{\epsilon} \left(\mathcal{P}_{\bm{\phi}}(\cdot),\mathcal{P}_{0}(\cdot)\right) = \left\{\mathcal{P}_{\bm{\phi}}(\cdot): \mathcal{W}_{2}^{2} \left(\mathcal{P}_{\bm{\phi}}(\cdot|\bm{s}, \bm{a}), \mathcal{P}_{0}(\cdot|\bm{s}, \bm{a})\right) \leq \epsilon, \ \ \forall (\bm{s}, \bm{a}) \in \mathcal{S} \times \mathcal{A} \right\},
\end{equation}
where $\epsilon \in \mathbb{R}_{+}$ is a hyperparameter used to specify the ``degree of robustness'' in a similar spirit to maximum norm bounds in robust optimal control. It is worth noting, that though we have access to samples from a reference simulator, our setting is by no means restricted to model-based reinforcement learning in an MDP setting. \ul{That is, our algorithm operates successfully given only traces from $\mathcal{P}_{0}$ accompanied with its specification parameters, e.g., pole lengths, torso masses, etc. -- a more flexible framework that does not require full model learners.}

Though defining a better behaved optimisation objective, the set in Equation \ref{Eq:ConstraintSetOne} introduces infinite number of constraints when considering continuous state and/or actions spaces. To remedy this problem, we target a relaxed version that considers average Wasserstein constraints instead: \begin{align}
\label{Eq:AverageConstraint}
\hat{\mathcal{W}}^{(\text{average})}_{\epsilon} \left(\mathcal{P}_{\bm{\phi}}(\cdot),\mathcal{P}_{0}(\cdot)\right) &= \left\{\mathcal{P}_{\bm{\phi}}(\cdot): \int_{(\bm{s}, \bm{a})} \mathcal{P}(\bm{s}, \bm{a})\mathcal{W}_{2}^{2} \left(\mathcal{P}_{\bm{\phi}}(\cdot|\bm{s}, \bm{a}), \mathcal{P}_{0}(\cdot|\bm{s}, \bm{a})\right) d(\bm{s}, \bm{a}) \leq \epsilon \right\} \\ \nonumber
& = \left\{\mathcal{P}_{\bm{\phi}}(\cdot): \mathbb{E}_{(\bm{s}, \bm{a}) \sim \mathcal{P}}\left[\mathcal{W}_{2}^{2} \left(\mathcal{P}_{\bm{\phi}}(\cdot|\bm{s}, \bm{a}), \mathcal{P}_{0}(\cdot|\bm{s}, \bm{a})\right)\right] \leq \epsilon \right\}
\end{align}
In words, Equation \ref{Eq:AverageConstraint} defines a set where \emph{expected} Wasserstein distance is bounded by $\epsilon$. Expectation in the above equation is evaluated over state-action pairs sampled according to $\mathcal{P}(\bm{s}, \bm{a})$ which is policy and transition-model dependent. Precisely, one can factor $\mathcal{P}(\bm{s}, \bm{a})$ as: 
\begin{align*}
    \mathcal{P}(\bm{s} \in \mathcal{S}, \bm{a} \in \mathcal{A}) = \mathcal{P}(\bm{a} \in \mathcal{A}|\bm{s} \in \mathcal{S})\mathcal{P}(\bm{s}\in \mathcal{S}) = \pi(\bm{a} \in \mathcal{A}|\bm{s} \in \mathcal{S})\rho_{\pi}^{\bm{\phi}_{0}}(\bm{s} \in \mathcal{S}),  
\end{align*}
where $\bm{s} \in \mathcal{S}$ and $\bm{a} \in \mathcal{A}$ are to be interpreted as events being elements of the state and action sets  -- a notation better suites for the continuous nature of the considered random variables. Moreover,  $\rho_{\pi}^{\bm{\phi}_{0}}(\bm{s} \in \mathcal{S})$ is a uniform distribution over state-actions pairs sampled from a trajectory. Precisely, the way we compute the expected Wasserstein distance is two steps. In the first, given a batch of trajectories sampled according to any policy $\pi$, potentially of varying lengths, we create a bucket of state-action pairs\footnote{Since we do not require access to current policies in order to compute the average Wasserstein distance, an argument can be made that $\text{W}\text{R}^{2}\text{L}$ support off-policy constraint evaluation. This is an important link that we plan to further exploit in the future to improve efficiency, scalablity, and enable transfer between various tasks. }. Given such data, we then compute expected Wasserstein distance over the pairs using Monte-Carlo estimation. The $\pi$ we use is one which samples actions uniformly at random.

With this in mind, we arrive at $\text{W}\text{R}^{2}\text{L}$'s optimisation problem allowing for best policies under worst-case yet bounded transition models: 

\begin{tcolorbox}[enhanced, arc=0pt,outer arc=0pt, colback=white, colframe=black, drop shadow={black,opacity=1}]
\textbf{\underline{Wasserstein Robust Reinforcement Learning Objective:}}
\begin{align}
\label{Eq:FinalObjective}
    &\max_{{\bm{\theta}}} \left[\min_{{\bm{\phi}}} \mathbb{E}_{\bm{\tau} \sim p_{\bm{\theta}}^{\bm{\phi}}(\bm{\tau})}\left[\mathcal{R}_{\text{total}}(\bm{\tau})\right]\right] \ \ \text{s.t.} \ \ \mathbb{E}_{(\bm{s}, \bm{a})\sim \pi(\cdot)\rho_{\pi}^{\bm{\phi}_{0}}(\cdot)}\left[\mathcal{W}_{2}^{2} \left(\mathcal{P}_{\bm{\phi}}(\cdot|\bm{s}, \bm{a}), \mathcal{P}_{0}(\cdot|\bm{s}, \bm{a})\right)\right] \leq \epsilon
\end{align}
\end{tcolorbox}

\subsection{Solution Methodology}\label{Sec:Sol}
Having derived our formal problem definition, this section presents our approach to solving for $\bm{\theta}$ and $\bm{\phi}$ in the objective of Equation \ref{Eq:FinalObjective}. On a high level, our solution methodology follows an alternating procedure interchangeably updating one variable given the other fixed. 

\paragraph{Updating Policy Parameters:} It is clear from Equation \ref{Eq:FinalObjective} that the average Wasserstein distance constraint is independent from $\bm{\theta}$ and can, in fact, use any other policy $\pi$ to estimate the expectation. Hence, given a fixed set of model parameters, $\bm{\theta}$ can be updated by solving the relevant sub-problem of Equation \ref{Eq:FinalObjective} written as: 
\begin{equation*}
    \max_{{\bm{\theta}}} \mathbb{E}_{\bm{\tau} \sim p_{\bm{\theta}}^{\phi}(\bm{\tau})}\left[\mathcal{R}_{\text{total}}(\bm{\tau})\right]. 
\end{equation*}
Interestingly, this problem is a standard reinforcement learning one with a crucial difference in that traces are sampled according to the transition model given by fixed model parameters, $\bm{\phi}$ that ultimately differ from these of the original simulator $\bm{\phi}_{0}$. Consequently, one can easily adapt any policy search method for updating policies under fixed dynamical models. As described later in Section \ref{Sec:Implementation}, we make use of proximal policy optimisation \citep{schulman2017proximal}, for instance, to update such action-selection-rules.

\paragraph{Updating Model Parameters:} Now, we turn our attention to solving the average constraint optimisation problem needed for updating $\bm{\phi}$ given a set of fixed policy parameters $\bm{\theta}$. Contrary to the previous step, here, the Wasserstein constraints play an important role due to their dependence on $\bm{\phi}$. Unfortunately, even with the simplification introduced in Section \ref{Sec:ProbDefObj} the resultant constraint is still difficult to computer in general, the difficulty being the evaluation of the Wasserstein term\footnote{The situation is easier in case of two Gaussian densities. Here, however, we keep the treatment general by proposing an alternative direction using Taylor expansions.}. 

To alleviate this problem, we propose to approximate the constraint in \eqref{Eq:FinalObjective} by its Taylor expansion up to second order. That is, defining 
\begin{equation*}
    W(\bm{\phi}) :=  \mathbb{E}_{(\bm{s}, \bm{a})\sim \pi(\cdot)\rho_{\pi}^{\bm{\phi}_{0}}(\cdot)}\left[\mathcal{W}_{2}^{2} \left(\mathcal{P}_{\bm{\phi}}(\cdot|\bm{s}, \bm{a}), \mathcal{P}_{0}(\cdot|\bm{s}, \bm{a})\right)\right] 
\end{equation*}

The above can be approximated around $\bm{\phi}_{0}$ by a second-order Taylor as: 
\begin{equation*}
     W(\bm{\phi})  \approx  W(\bm{\phi}_{0}) + \nabla_{\bm{\phi}} W(\bm{\phi}_{0})^{\mathsf{T}}\left(\bm{\phi}-\bm{\phi}_{0}\right) + \frac{1}{2}\left(\bm{\phi} - \bm{\phi}_{0}\right)^{\mathsf{T}}\nabla^{2}_{\bm{\phi}} W(\bm{\phi}_{0})\left(\bm{\phi} - \bm{\phi}_{0}\right).
\end{equation*}

Recognising that $W(\bm{\phi}_{0}) = 0$ (the distance between the same probability densities), and $\nabla_{\bm{\phi}} W(\bm{\phi}_{0}) = 0$ since $\bm{\phi}_{0}$ minimises  $W(\bm{\phi})$, we can simplify the Hessian approximation by writing: 
\begin{equation*}
 W(\bm{\phi}) \approx \frac{1}{2} (\bm{\phi} - \bm{\phi}_{0})^{\mathsf{T}} \nabla^{2}_{\bm{\phi}} W(\bm{\phi}_{0})(\bm{\phi} - \bm{\phi}_{0}).
\end{equation*}

Substituting our approximation back in the original problem in Equation \ref{Eq:FinalObjective}, we reach the following optimisation problem for determining model parameter given fixed policies:
\begin{equation}
\label{Eq:UpdatingModel}
    \min_{\bm{\phi}} \mathbb{E}_{\bm{\tau} \sim p_{\bm{\theta}}^{\bm{\phi}}(\bm{\tau})}\left[\mathcal{R}_{\text{total}}(\bm{\tau})\right] \ \ \text{s.t.} \ \ \frac{1}{2} (\bm{\phi} - \bm{\phi}_{0})^{\mathsf{T}} \bm{H}_{0}(\bm{\phi} - \bm{\phi}_{0}) \leq \epsilon, 
\end{equation} 
where $\bm{H}_{0} =  \nabla^{2}_{\bm{\phi}}\, \mathbb{E}_{(\bm{s}, \bm{a})\sim \pi(\cdot)\rho_{\pi}^{\bm{\phi}_{0}}(\cdot)}\left[\mathcal{W}_{2}^{2} \left(\mathcal{P}_{\bm{\phi}}(\cdot|\bm{s}, \bm{a}), \mathcal{P}_{0}(\cdot|\bm{s}, \bm{a})\right)\right]\bigg|_{\bm{\phi}=\bm{\phi}_0}$ is the Hessian of the expected squared $2$-Wasserstein distance evaluated at $\bm{\phi}_0$.

Optimisation problems with quadratic constraints can be efficiently solved using interior-point methods. To do so, one typically approximates the loss with a first-order expansion and determines a closed-form solution. Consider a pair of parameters $\bm{\theta}^{[k]}$ and $\bm{\phi}^{[j]}$ (which will correspond to parameters of the $j$'th inner loop of the $k$'th outer loop in the algorithm we present). To find $\bm{\phi}^{[j+1]}$, we solve:
\begin{equation*}
    \min_{\bm{\phi}} \nabla_{\bm{\phi}} \mathbb{E}_{\bm{\tau}\sim p_{\bm{\theta}}^{\bm{\phi}}(\bm{\tau})}\left[\mathcal{R}_{\text{total}}(\bm{\tau})\right] \Bigg|^{\mathsf{T}}_{\bm{\theta}^{[k]}, \bm{\phi}^{[j]}}(\bm{\phi} - \bm{\phi}^{[j]}) \ \ \text{s.t.} \ \ \frac{1}{2} (\bm{\phi} - \bm{\phi}_{0})^{\mathsf{T}}\bm{H}_{0}(\bm{\phi} - \bm{\phi}_{0})\leq \epsilon.
\end{equation*}
It is easy to show that a minimiser to the above equation can derived in a closed-form as: 
\begin{align}
\label{Eq:UpdateEqPhi}
    \bm{\phi}^{[j+1]} = \bm{\phi}_{0} - \sqrt{\frac{2\epsilon}{\bm{g}^{[k,j] \mathsf{T}}\bm{H}_{0}^{-1}\bm{g}^{[k,j]}}} \bm{H}_{0}^{-1}\bm{g}^{[k,j]},
\end{align}
with $\bm{g}^{[k,j]}$ denoting the gradient\footnote{\textbf{Remark:} Although this looks superficially similar to an approximation made in TRPO \cite[]{pmlr-v37-schulman15}, the latter aims to optimise the \emph{policy parameter} rather than dynamics. Furthermore, the constraint is based on the Kullback-Leibler divergence rather than the Wasserstein distance} evaluated at $\bm{\theta}^{[k]}$ and $\bm{\phi}^{[j]}$, i.e., $\bm{g}^{[k,j]} = \nabla_{\bm{\phi}}\mathbb{E}_{\bm{\tau} \sim p_{\bm{\theta}}^{\bm{\phi}}(\bm{\tau})} \mathbb{E}\left[\mathcal{R}_{\text{total}}(\bm{\tau})\right]\Bigg|_{\bm{\theta}^{[k]}, \bm{\phi}^{[j]}}$.

\paragraph{Generic Algorithm:} Having described the two main steps needed for updating policies and models, we now summarise these findings in the pseudo-code in Algorithm~\ref{Algo:Main}. 
\begin{algorithm}[h!]
\caption{Wasserstein Robust Reinforcement Learning}
\label{Algo:Main}
\begin{algorithmic}[1]
 \STATE \textbf{Inputs:} Wasserstein distance Hessian, $\bm{H}_{0}$ evaluated at $\bm{\phi}_{0}$ under any policy $\pi$, radius of the Wasserstein ball $\epsilon$, and the reference simulator specification parameters $\bm{\phi}_{0}$
 \STATE Initialise $\bm{\phi}^{[0]}$ with $\bm{\phi}_0$ and  policy parameters $\bm{\theta}^{[0]}$ arbitrarily 
 \FOR{$k = 0, 1, \dots $}
 \STATE $\bm{x}^{[0]} \leftarrow \bm{\phi}^{[0]}$ 
 \STATE $j \leftarrow 0$
    \STATE \underline{\textbf{Phase I: Update model parameter while fixing the policy:}}
    \WHILE{termination condition not met} 
        \STATE Compute descent direction for the model parameters as given by Equation~\ref{Eq:UpdateEqPhi}: $$\bm{p}^{[j]} \leftarrow \bm{\phi}_{0} - \sqrt{\frac{2\epsilon}{\bm{g}^{[k,j] \mathsf{T}}\bm{H}_{0}^{-1}\bm{g}^{[k,j]}}} \bm{H}_{0}^{-1}\bm{g}^{[k,j]} - \bm{x}^{[j]}$$ \label{alg_p_comp}
        \STATE Update candidate solution, while satisfying step size conditions (see discussion below)  on the learning rate $\alpha$: 
        $$ \bm{x}^{[j+1]} \leftarrow \bm{x}^{[j]} + \alpha \bm{p}^{[j]}$$ \label{alg_x_update}
        \STATE $j \leftarrow j+1$   
    \ENDWHILE
    \STATE Perform model update setting $\bm{\phi}^{[k+1]} \leftarrow \bm{x}^{[j]}$ \label{alg_phi_update}
    \STATE \underline{\textbf{Phase II: Update policy given new model parameters:}}
    \STATE Use any standard reinforcement learning algorithm for \emph{ascending} in the gradient direction, e.g., $\bm{\theta}^{[k+1]} \leftarrow \bm{\theta}^{[k]} + \beta^{[k]}\nabla_{\bm{\theta}}\mathbb{E}_{\bm{\tau}\sim p_{\bm{\theta}}^{\bm{\phi}}(\bm{\tau})}\left[\mathcal{R}_{\text{total}}(\bm{\tau})\right]\Bigg|_{\bm{\theta}^{[k]}, \bm{\phi}^{[k+1]}}$, with $\beta^{[k]}$ is a policy learning rate. \label{alg_theta_update}
 \ENDFOR
\end{algorithmic}
\end{algorithm}
As the Hessian\footnote{Please note our algorithm does not need to store the Hessian matrix. In line 7 of the algorithm, it is clear that we require Hessian-vector products. These can be easily computed using computational graphs without having access to the full Hessian matrix.} of the Wasserstein distance is evaluated based on reference dynamics and any policy $\pi$, we pass it, along with $\epsilon$ and $\bm{\phi}_{0}$ as inputs. Then Algorithms~\ref{Algo:Main} operates in a descent-ascent fashion in two main phases. In the first, lines 5 to 10 in Algorithm~\ref{Algo:Main}, dynamics parameters are updated using the closed-form solution in Equation~\ref{Eq:UpdateEqPhi}, while ensuring that learning rates abide by a step size condition (we used the Wolfe conditions~\citep{wolfe1969convergence}, though it can be some other method). With this, the second phase (line 11) utilises any state-of-the-art reinforcement learning method to adapt policy parameters generating $\bm{\theta}^{[k+1]}$.  

Regarding the termination condition for the inner loop, we leave this as a decision for the user. It could be, for example, a large finite time-out, or the norm of the gradient  $\bm{g}^{[k,j]}$ being below a threshold, or whichever happens first. 

\section{Zero'th Order Wasserstein Robust Reinforcement Learning}\label{Sec:Implementation}
So far, we have presented an algorithm for robust reinforcement learning assuming accessibility to first and second-order information of the loss and constraint. It is relatively easy to attain such information if we were to follow a model-based setting that parameterises transitions with deep networks. As mentioned earlier, however, we follow another route that utilises a black-box optimisation scheme as deep neural networks are not always suitable for dynamical systems grounded in Lagrangian mechanics and physics \citep{Jan2, IROS_2019_DeLaN_Energy_Control}.   

This section details our zero-order robust solver, where we present an implementation of our approach for a scenario where training can be done on a simulator for which the dynamics are parameterised and can be altered at will. Whilst we refer to simulators (since much of RL training is done on such), almost exactly the same applies to differential equation solvers or other software based techniques for training a policy before deployment into the real world. 

To elaborate, consider a simulator $\mathbb{S}_{\bm{\phi}}$ for which the dynamics are parameterised by a real vector $\bm{\phi}$, and for which we can execute steps of a trajectory (i.e., the simulator takes as input an action $\bm{a}$ and gives back a successor state and reward). For generating novel physics-grounded transitions, one can simply alter $\bm{\phi}$ and execute the instruction in $\mathbb{S}_{\bm{\phi}}$ from some a state $\bm{s} \in \mathcal{S}$, while applying an action $\bm{a} \in \mathcal{A}$. Not only does this ensure valid (under mechanics) transitions, but also promises scalability as specification parameters typically reside in lower dimensional spaces compared to the number of tuneable weights when using deep networks as transition models.

As we do not explicitly model transitions (e.g., the intricate operations of the simulator or differential equations solver), one has to tackle an additional challenge when requiring gradient or Hessian information to perform optimisation. Namely, if the idea of parameterising simulators through dynamic specifications in Phases I and II of Algorithm \ref{Algo:Main} is to be successfully executed, we require a procedure for estimating first and second-order information based on only function value evaluations of $\mathbb{S}_{\bm{\phi}}$.     

Next, we elaborate how one can acquire such estimates by proposing a novel zero-order method for estimating gradients and Hessians that we  use in our experiments that demonstrate scalability, and robustness on high-dimensional robotic environments.  

\paragraph{Gradient Estimation:} Recalling the update rule in Phase I of Algorithm \ref{Algo:Main}, we realise the need for, estimating the gradient of the loss function with respect to the vector specifying the dynamics of the environment, i.e., $\bm{g}^{[k,j]} = \nabla_{\bm{\phi}}\mathbb{E}_{\bm{\tau} \sim p_{\bm{\theta}}^{\bm{\phi}}(\bm{\tau})} \left[\mathcal{R}_{\text{total}}(\bm{\tau})\right]\Bigg|_{\bm{\theta}^{[k]}, \bm{\phi}^{[j]}}$ at each iteration of the inner-loop $j$. Handling simulators as black-box models, we estimate the gradients by sampling from a Gaussian distribution with mean $\bm{0}$ and ${\sigma}^{2}\bm{I}$ co-variance matrix. Our choice for such estimates is not arbitrary but rather theoretically grounded as one can easily prove the following proposition:
\begin{proposition}[Zero-Order Gradient Estimate]
For a fixed $\bm{\theta}$ and $\bm{\phi}$, the gradient can be computed as: 
\begin{equation*}
    \nabla_{\bm{\phi}}\mathbb{E}_{\bm{\tau} \sim p_{\bm{\theta}}^{\bm{\phi}}(\bm{\tau})} \left[\mathcal{R}_{\text{total}}(\bm{\tau})\right] = \frac{1}{\sigma^{2}} \mathbb{E}_{\bm{\xi} \sim \mathcal{N}(\bm{0}, \sigma^{2}\bm{I})}\left[\bm{\xi}\int_{\bm{\tau}} p_{\bm{\theta}}^{\bm{\phi} + \bm{\xi}}(\bm{\tau})\mathcal{R}_{\text{total}}(\bm{\tau})d\bm{\tau}\right].
\end{equation*}
\end{proposition}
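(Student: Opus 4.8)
The plan is to read the right-hand side as a Gaussian-smoothing (score-function) estimator and to reduce it to an ordinary gradient by a change of variables. Write $J(\bm{\phi}) \defeq \mathbb{E}_{\bm{\tau}\sim p_{\bm{\theta}}^{\bm{\phi}}(\bm{\tau})}[\mathcal{R}_{\text{total}}(\bm{\tau})] = \int_{\bm{\tau}} p_{\bm{\theta}}^{\bm{\phi}}(\bm{\tau})\mathcal{R}_{\text{total}}(\bm{\tau})\,d\bm{\tau}$ and let $\eta_\sigma(\bm{\xi}) \defeq (2\pi\sigma^2)^{-d_2/2}\exp(-\|\bm{\xi}\|^2/(2\sigma^2))$ be the density of $\mathcal{N}(\bm{0},\sigma^2\bm{I})$ on $\mathbb{R}^{d_2}$. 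The only nontrivial ingredient is the Gaussian score identity $\nabla_{\bm{\xi}}\log\eta_\sigma(\bm{\xi}) = -\bm{\xi}/\sigma^2$, equivalently $\bm{\xi}\,\eta_\sigma(\bm{\xi}) = -\sigma^2\,\nabla_{\bm{\xi}}\eta_\sigma(\bm{\xi})$; differentiating in the mean gives the companion identity $\nabla_{\bm{\phi}}\eta_\sigma(\bm{\psi}-\bm{\phi}) = \tfrac{1}{\sigma^2}(\bm{\psi}-\bm{\phi})\,\eta_\sigma(\bm{\psi}-\bm{\phi})$.

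First I would substitute $\bm{\psi} = \bm{\phi}+\bm{\xi}$ in the right-hand side, so that $\tfrac{1}{\sigma^2}\mathbb{E}_{\bm{\xi}}[\bm{\xi}\,J(\bm{\phi}+\bm{\xi})] = \int_{\mathbb{R}^{d_2}} J(\bm{\psi})\,\tfrac{1}{\sigma^2}(\bm{\psi}-\bm{\phi})\,\eta_\sigma(\bm{\psi}-\bm{\phi})\,d\bm{\psi} = \int_{\mathbb{R}^{d_2}} J(\bm{\psi})\,\nabla_{\bm{\phi}}\eta_\sigma(\bm{\psi}-\bm{\phi})\,d\bm{\psi}$. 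Pulling $\nabla_{\bm{\phi}}$ outside the integral then yields $\nabla_{\bm{\phi}}\int J(\bm{\psi})\,\eta_\sigma(\bm{\psi}-\bm{\phi})\,d\bm{\psi} = \nabla_{\bm{\phi}}\,\mathbb{E}_{\bm{\xi}\sim\mathcal{N}(\bm{0},\sigma^2\bm{I})}[J(\bm{\phi}+\bm{\xi})]$, the gradient of the Gaussian-convolved objective. An equivalent route is to substitute the score identity directly and integrate by parts in $\bm{\xi}$ — the boundary term vanishing because $\eta_\sigma$ and its derivatives decay faster than any polynomial while $\mathcal{R}_{\text{total}}$, hence $J$, is bounded over trajectories of the finite horizon $T$ — obtaining $\mathbb{E}_{\bm{\xi}}[\nabla_{\bm{\phi}}J(\bm{\phi}+\bm{\xi})]$. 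In either case, expanding $J$ back to the double integral $\int\!\int p_{\bm{\theta}}^{\bm{\phi}+\bm{\xi}}(\bm{\tau})\mathcal{R}_{\text{total}}(\bm{\tau})\,d\bm{\tau}\,\eta_\sigma(\bm{\xi})\,d\bm{\xi}$ reproduces exactly the expression in the statement, and since, in the factorisation $p_{\bm{\theta}}^{\bm{\phi}}(\bm{\tau}) = \mu_0(\bm{s}_0)\prod_t \mathcal{P}_{\bm{\phi}}(\bm{s}_{t+1}|\bm{s}_t,\bm{a}_t)\pi_{\bm{\theta}}(\bm{a}_t|\bm{s}_t)$, all $\bm{\phi}$-dependence sits inside the transition factors, $\bm{\theta}$ enters only as a fixed parameter.

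The main obstacle is the analytic bookkeeping rather than the one-line algebra: one must justify interchanging $\nabla_{\bm{\phi}}$ with the (generally high-dimensional, possibly improper) integral over $\bm{\tau}$ and with the integral over $\bm{\xi}$ via the Leibniz rule, which here reduces to boundedness of $\mathcal{R}_{\text{total}}$ together with integrability of $\bm{\psi}\mapsto \sup_{\bm{\phi}'\text{ near }\bm{\phi}}\|\bm{\psi}-\bm{\phi}'\|\,\eta_\sigma(\bm{\psi}-\bm{\phi}')$ and continuity of $\bm{\phi}\mapsto p_{\bm{\theta}}^{\bm{\phi}}(\bm{\tau})$. The one genuine caveat to record is that the computation delivers $\nabla_{\bm{\phi}}\mathbb{E}_{\bm{\xi}}[J(\bm{\phi}+\bm{\xi})]$, the gradient of the $\sigma$-smoothed objective (the standard evolution-strategies estimator); the displayed equality is therefore to be read in that smoothed sense, and it recovers $\nabla_{\bm{\phi}}J(\bm{\phi})$ exactly in the limit $\sigma\to0$ (and up to $O(\sigma^2)$ for small $\sigma$, under a bounded-Hessian assumption on $J$).
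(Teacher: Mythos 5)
Your proof is correct but takes a genuinely different route from the paper's. The paper Taylor-expands $\mathcal{J}_{\bm{\theta}}(\bm{\phi}+\bm{\xi})$ to second order around $\bm{\phi}$, multiplies by $\bm{\xi}$, and evaluates Gaussian moments: the zeroth- and second-order terms vanish (odd moments of a centred Gaussian), the first-order term yields $\sigma^{2}\nabla_{\bm{\phi}}\mathcal{J}_{\bm{\theta}}(\bm{\phi})$, and the $\mathcal{O}(\text{higher-order})$ remainder is then silently dropped --- so the displayed equality is exact only up to $O(\sigma^{2})$ (or exactly when $\mathcal{J}_{\bm{\theta}}$ is quadratic in $\bm{\phi}$). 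You instead use the Gaussian score identity with a change of variables (equivalently, integration by parts), which gives an \emph{exact} identity: the right-hand side equals $\nabla_{\bm{\phi}}\,\mathbb{E}_{\bm{\xi}}\left[\mathcal{J}_{\bm{\theta}}(\bm{\phi}+\bm{\xi})\right]$, the gradient of the Gaussian-smoothed objective, at the cost of having to justify interchanging differentiation with the integrals (which you address via boundedness of $\mathcal{R}_{\text{total}}$ over the finite horizon and the decay of the Gaussian). Your closing caveat is precisely the point the paper glosses over: the proposition's equality sign holds only in the smoothed sense or in the limit $\sigma\to 0$; both derivations lose the same $O(\sigma^{2})$ bias, but yours makes it explicit while the paper's hides it in the truncated Taylor expansion. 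What each buys: the paper's argument is shorter, uses only elementary moment computations, and runs in parallel with its Hessian estimate (Proposition \ref{Hess_prop}); yours identifies the estimator as the standard evolution-strategies/Gaussian-smoothing gradient and makes the bias structure and regularity requirements transparent.
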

\begin{proof}
The proof of the above proposition can easily be derived by combining the lines of reasoning in \citep{salimans2017evolution, RePEc:cor:louvco:2011001}, while extending to the parameterisation of dynamical models. To commence, begin by defining $\mathcal{J}_{\bm{\theta}}(\bm{\phi}) = \mathbb{E}_{\bm{\tau} \sim p_{\bm{\theta}}^{\phi}}\left[\mathcal{R}_{\text{total}}(\bm{\tau})\right]$ for fixed policy parameters $\bm{\theta}$. Given any perturbation vector, $\bm{\xi} \sim \mathcal{N}(\bm{0}, \sigma^{2}\bm{I})$, we can derive (through a Taylor expansion) the following: 
\begin{align*}
    \mathcal{J}_{\bm{\theta}}(\bm{\phi} + \bm{\xi}) = \mathcal{J}_{\bm{\theta}}(\bm{\phi}) + \bm{\xi}^{\mathsf{T}}\nabla_{\bm{\phi}}\mathcal{J}_{\bm{\theta}}(\bm{\phi}) + \frac{1}{2}\bm{\xi}^{\mathsf{T}}\nabla^{2}_{\bm{\phi}}\mathcal{J}_{\bm{\theta}}(\bm{\phi})\bm{\xi} + \mathcal{O}\left(\text{higher-order terms}\right).
\end{align*}
Multiplying by $\bm{\xi}$, and taking the expectation on both sides of the above equation, we get: 
\begin{align*}
    \mathbb{E}_{\bm{\xi} \sim \mathcal{N}(0, \sigma^{2}\bm{I})}\left[\bm{\xi}\mathcal{J}_{\bm{\theta}}(\bm{\phi} + \bm{\xi})\right] &= \mathbb{E}_{\bm{\xi}\sim \mathcal{N}(\bm{0}, \sigma^{2}\bm{I})}\left[\bm{\xi}\mathcal{J}_{\bm{\theta}}(\bm{\phi})+\bm{\xi}\bm{\xi}^{\mathsf{T}}\nabla_{\bm{\phi}}\mathcal{J}_{\bm{\theta}}(\bm{\phi})+\frac{1}{2}\bm{\xi}\bm{\xi}^{\mathsf{T}}\nabla_{\bm{\phi}}^{2}\mathcal{J}_{\bm{\theta}}(\bm{\phi})\bm{\xi}\right] \\
    & = \sigma^{2} \nabla_{\bm{\phi}} \mathcal{J}_{\bm{\theta}}(\bm{\phi}).
\end{align*}
Dividing by $\sigma^{2}$, we derive the statement of the proposition as: 
\begin{align*}
    \nabla_{\bm{\phi}} \mathcal{J}_{\bm{\theta}}(\bm{\phi}) = \frac{1}{\sigma^{2}} \mathbb{E}_{\bm{\xi} \sim \mathcal{N}(0, \sigma^{2}\bm{I})}\left[\bm{\xi}\mathcal{J}_{\bm{\theta}}(\bm{\phi} + \bm{\xi})\right]
\end{align*}
\end{proof}

\paragraph{Hessian Estimation:} Having derived a zero-order gradient estimator, we now generalise these notions to a form allowing us to estimate the Hessian. It is also worth reminding the reader that such a Hessian estimator needs to be performed one time only before executing the instructions in Algorithm \ref{Algo:Main} (i.e., $\bm{H}_{0}$ is passed as an input). Precisely, we prove the following proposition: \\

\begin{proposition}[Zero-Order Hessian Estimate]\label{Hess_prop}
The hessian of the Wasserstein distance around $\bm{\phi}_{0}$ can be estimated based on function evaluations. Recalling that $\bm{H}_{0} =  \nabla^{2}_{\bm{\phi}}\, \mathbb{E}_{(\bm{s}, \bm{a})\sim \pi(\cdot)\rho_{\pi}^{\bm{\phi}_{0}}(\cdot)}\left[\mathcal{W}_{2}^{2} \left(\mathcal{P}_{\bm{\phi}}(\cdot|\bm{s}, \bm{a}), \mathcal{P}_{0}(\cdot|\bm{s}, \bm{a})\right)\right]\bigg|_{\bm{\phi}=\bm{\phi}_0}$, and defining  $\mathcal{W}_{(\bm{s}, \bm{a})}(\bm{\phi}):=\mathcal{W}_{2}^{2} \left(\mathcal{P}_{\bm{\phi}}(\cdot|\bm{s}, \bm{a}), \mathcal{P}_{0}(\cdot|\bm{s}, \bm{a})\right)$, we prove: 
\begin{align*}
   \bm{H}_{0} &= \frac{1}{\sigma^{2}} \mathbb{E}_{\bm{\xi}\sim \mathcal{N}(\bm{0}, \sigma^{2}\bm{I})}\Bigg[\frac{1}{\sigma^{2}}\bm{\xi}\left(\mathbb{E}_{(\bm{s}, \bm{a})\sim \pi(\cdot)\rho_{\pi}^{\bm{\phi}_{0}}(\cdot)}\left[\mathcal{W}_{(\bm{s}, \bm{a})}\left(\bm{\phi}_{0} + \bm{\xi}\right)\right]\right)\bm{\xi}^{\mathsf{T}} \\
    &\hspace{20em} - \mathbb{E}_{(\bm{s}, \bm{a})\sim \pi(\cdot)\rho_{\pi}^{\bm{\phi}_{0}}(\cdot)}\left[\mathcal{W}_{(\bm{s}, \bm{a})}(\bm{\phi}_{0} + \bm{\xi})\right]\bm{I}\Bigg].    
\end{align*}
\end{proposition}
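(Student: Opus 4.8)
The plan is to recognise the claimed identity as the second-order analogue of the Gaussian-smoothing (Stein-type) identity already used for the gradient estimate, and to verify it by a Taylor expansion combined with Gaussian moment computations. Write $W(\bm{\phi}) := \mathbb{E}_{(\bm{s},\bm{a})\sim\pi(\cdot)\rho_{\pi}^{\bm{\phi}_{0}}(\cdot)}\left[\mathcal{W}_{(\bm{s},\bm{a})}(\bm{\phi})\right]$, so that $\bm{H}_{0} = \nabla^{2}_{\bm{\phi}} W(\bm{\phi})\big|_{\bm{\phi}=\bm{\phi}_{0}}$. Since the inner expectation over $(\bm{s},\bm{a})$ does not involve $\bm{\xi}$, Fubini's theorem lets me interchange $\mathbb{E}_{(\bm{s},\bm{a})}$ and $\mathbb{E}_{\bm{\xi}}$ on the right-hand side, so it suffices to establish
\begin{equation*}
\bm{H}_{0} = \frac{1}{\sigma^{4}}\mathbb{E}_{\bm{\xi}\sim\mathcal{N}(\bm{0},\sigma^{2}\bm{I})}\left[\bm{\xi}\bm{\xi}^{\mathsf{T}}\, W(\bm{\phi}_{0}+\bm{\xi})\right] - \frac{1}{\sigma^{2}}\mathbb{E}_{\bm{\xi}\sim\mathcal{N}(\bm{0},\sigma^{2}\bm{I})}\left[W(\bm{\phi}_{0}+\bm{\xi})\right]\bm{I},
\end{equation*}
which is the right-hand side of the proposition once the scalar prefactors are combined.

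First I would Taylor-expand $W$ about $\bm{\phi}_{0}$,
\begin{equation*}
W(\bm{\phi}_{0}+\bm{\xi}) = W(\bm{\phi}_{0}) + \bm{\xi}^{\mathsf{T}}\nabla_{\bm{\phi}}W(\bm{\phi}_{0}) + \tfrac{1}{2}\bm{\xi}^{\mathsf{T}}\bm{H}_{0}\bm{\xi} + \mathcal{O}\!\left(\|\bm{\xi}\|^{3}\right),
\end{equation*}
and use the two facts already noted in Section~\ref{Sec:Sol}: $W(\bm{\phi}_{0})=0$ (distance of a density to itself) and $\nabla_{\bm{\phi}}W(\bm{\phi}_{0})=0$ ($\bm{\phi}_{0}$ minimises $W$). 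Substituting and evaluating term by term against $\bm{\xi}\sim\mathcal{N}(\bm{0},\sigma^{2}\bm{I})$: the constant term vanishes since $W(\bm{\phi}_{0})=0$; the linear term is odd in $\bm{\xi}$, so both $\mathbb{E}[\bm{\xi}\bm{\xi}^{\mathsf{T}}(\bm{\xi}^{\mathsf{T}}\nabla W(\bm{\phi}_{0}))]=\bm{0}$ and $\mathbb{E}[\bm{\xi}^{\mathsf{T}}\nabla W(\bm{\phi}_{0})]=0$; and for the quadratic term I would use $\mathbb{E}[\bm{\xi}^{\mathsf{T}}\bm{H}_{0}\bm{\xi}]=\sigma^{2}\,\mathrm{tr}(\bm{H}_{0})$ together with the fourth-moment (Isserlis/Wick) identity $\mathbb{E}[\bm{\xi}\bm{\xi}^{\mathsf{T}}\bm{M}\bm{\xi}\bm{\xi}^{\mathsf{T}}] = \sigma^{4}\left(\bm{M}+\bm{M}^{\mathsf{T}}+\mathrm{tr}(\bm{M})\bm{I}\right)$ with $\bm{M}=\tfrac{1}{2}\bm{H}_{0}$. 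This yields $\tfrac{1}{\sigma^{4}}\mathbb{E}[\bm{\xi}\bm{\xi}^{\mathsf{T}}\cdot\tfrac{1}{2}\bm{\xi}^{\mathsf{T}}\bm{H}_{0}\bm{\xi}] = \bm{H}_{0} + \tfrac{1}{2}\mathrm{tr}(\bm{H}_{0})\bm{I}$ and $\tfrac{1}{\sigma^{2}}\mathbb{E}[\tfrac{1}{2}\bm{\xi}^{\mathsf{T}}\bm{H}_{0}\bm{\xi}]\,\bm{I} = \tfrac{1}{2}\mathrm{tr}(\bm{H}_{0})\bm{I}$, whose difference is precisely $\bm{H}_{0}$.

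The main obstacle is the fourth-moment tensor contraction: one must expand $\mathbb{E}[\xi_{i}\xi_{k}\xi_{l}\xi_{j}] = \sigma^{4}(\delta_{ik}\delta_{jl}+\delta_{il}\delta_{jk}+\delta_{ij}\delta_{kl})$ and track the three resulting pieces so that the spurious $\mathrm{tr}(\bm{H}_{0})\bm{I}$ term generated by the $\tfrac{1}{\sigma^{4}}\bm{\xi}\bm{\xi}^{\mathsf{T}}$ weighting is cancelled exactly by the $-\bm{I}$ term, with symmetry of $\bm{H}_{0}$ collapsing $\bm{M}+\bm{M}^{\mathsf{T}}$ to $\bm{H}_{0}$; the rest is bookkeeping. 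The remaining point needing care is the $\mathcal{O}(\|\bm{\xi}\|^{3})$ remainder: after multiplication by $\tfrac{1}{\sigma^{2}}\big(\tfrac{1}{\sigma^{2}}\bm{\xi}\bm{\xi}^{\mathsf{T}}-\bm{I}\big)$ and integration its contribution is $\mathcal{O}(\sigma)$, so the identity holds exactly in the $\sigma\to 0$ limit — and exactly for all $\sigma$ when $W$ is quadratic, which is the order of approximation already adopted for the constraint in Equation~\ref{Eq:UpdatingModel} — mirroring the treatment of higher-order terms in the gradient proposition above. Finally, the interchange of the $(\bm{s},\bm{a})$- and $\bm{\xi}$-expectations is justified by Fubini as soon as $\mathcal{W}_{(\bm{s},\bm{a})}(\bm{\phi}_{0}+\bm{\xi})$ is integrable against the Gaussian in $\bm{\xi}$, a mild regularity assumption on the simulator.
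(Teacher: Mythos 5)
Your proposal is correct and follows essentially the same route as the paper's proof: a second-order Taylor expansion of the averaged Wasserstein term about $\bm{\phi}_{0}$ followed by Gaussian moment computations in which the $\mathrm{tr}(\bm{H}_{0})\bm{I}$ contribution from the fourth-moment term is cancelled by the $-\bm{I}$ term. The only differences are cosmetic — you invoke the compact Isserlis/Wick identity where the paper grinds through the diagonal/off-diagonal index cases, and you are more explicit about the $\mathcal{O}(\|\bm{\xi}\|^{3})$ remainder, which the paper absorbs into an approximation sign.
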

\begin{proof}
Commencing with the right-hand-side of the above equation, we perform second-order Taylor expansions for each of the two terms under under the expectation of $\bm{\xi}$. Namely, we write: 
\begin{align}
\label{Eq:HessianApprox}
  \bm{H}_{0} &= \frac{1}{\sigma^{2}} \mathbb{E}_{\bm{\xi}\sim \mathcal{N}(\bm{0}, \sigma^{2}\bm{I})}\Bigg[\frac{1}{\sigma^{2}}\bm{\xi}\left(\mathbb{E}_{(\bm{s}, \bm{a})\sim \pi(\cdot)\rho_{\pi}^{\bm{\phi}_{0}}(\cdot)}\left[\mathcal{W}_{(\bm{s}, \bm{a})}\left(\bm{\phi}_{0} + \bm{\xi}\right)\right]\right)\bm{\xi}^{\mathsf{T}} \\\nonumber
    &\hspace{23em} - \mathbb{E}_{(\bm{s}, \bm{a})\sim \pi(\cdot)\rho_{\pi}^{\bm{\phi}_{0}}(\cdot)}\left[\mathcal{W}_{(\bm{s}, \bm{a})}(\bm{\phi}_{0} + \bm{\xi})\right]\bm{I}\Bigg] \\ \nonumber
    & \approx \frac{1}{\sigma^{4}}\mathbb{E}_{\bm{\xi}\sim\mathcal{N}(\bm{0}, \sigma^{2}\bm{I})}\Bigg[\mathbb{E}_{(\bm{s}, \bm{a})\sim \pi(\cdot)\rho_{\pi}^{\phi_{0}}}\left[\mathcal{W}_{(\bm{s},\bm{a})}(\bm{\phi}_{0})\right]\bm{\xi}\bm{\xi}^{\mathsf{T}}  + \bm{\xi}\bm{\xi}^{\mathsf{T}}\nabla_{\bm{\phi}}\mathbb{E}_{(\bm{s}, \bm{a})\sim \pi(\cdot)\rho_{\pi}^{\phi_{0}}}\left[\mathcal{W}_{(\bm{s},\bm{a})}(\bm{\phi}_{0})\right]\bm{\xi}^{\mathsf{T}} \\ \nonumber
   & \hspace{23em}+\frac{1}{2} \bm{\xi}^{\mathsf{T}}\nabla^{2}_{\bm{\phi}}\mathbb{E}_{(\bm{s}, \bm{a})\sim \pi(\cdot)\rho_{\pi}^{\phi_{0}}}\left[\mathcal{W}_{(\bm{s},\bm{a})}(\bm{\phi}_{0})\right]\bm{\xi}\bm{I}\Bigg] \\ \nonumber
   &\hspace{0em} - \frac{1}{\sigma^{2}}\mathbb{E}_{\bm{\xi}\sim\mathcal{N}(\bm{0}, \sigma^{2}\bm{I})} \Bigg[\mathbb{E}_{(\bm{s}, \bm{a})\sim \pi(\cdot)\rho_{\pi}^{\phi_{0}}}\left[\mathcal{W}_{(\bm{s},\bm{a})}(\bm{\phi}_{0})\right]\bm{I} + \bm{\xi}^{\mathsf{T}}\nabla_{\bm{\phi}}\mathbb{E}_{(\bm{s}, \bm{a})\sim \pi(\cdot)\rho_{\pi}^{\phi_{0}}}\left[\mathcal{W}_{(\bm{s},\bm{a})}(\bm{\phi}_{0})\right]\bm{I} \\\nonumber
   & \hspace{23em}+\frac{1}{2}\bm{\xi}^{\mathsf{T}}\nabla^{2}_{\bm{\phi}}\mathbb{E}_{(\bm{s}, \bm{a})\sim \pi(\cdot)\rho_{\pi}^{\phi_{0}}}\left[\mathcal{W}_{(\bm{s},\bm{a})}(\bm{\phi}_{0})\right]\bm{\xi}\bm{I}\Bigg].
\end{align}
Now, we analyse each of the above terms separately. For ease of notation, we define the following variables: 
\begin{align*}
    \bm{g} &= \nabla_{\bm{\phi}}\mathbb{E}_{(\bm{s}, \bm{a})\sim \pi(\cdot)\rho_{\pi}^{\phi_{0}}}\left[\mathcal{W}_{(\bm{s},\bm{a})}(\bm{\phi}_{0})\right]     \hspace{10em}\bm{H} = \nabla^{2}_{\bm{\phi}}\mathbb{E}_{(\bm{s}, \bm{a})\sim \pi(\cdot)\rho_{\pi}^{\phi_{0}}}\left[\mathcal{W}_{(\bm{s},\bm{a})}(\bm{\phi}_{0})\right] \\
    \bm{A} & = \bm{\xi}\bm{\xi}^{\mathsf{T}}\bm{g}\bm{\xi}^{\mathsf{T}}     \hspace{20em}\bm{B} = \bm{\xi}\bm{\xi}^{\mathsf{T}}\bm{H}\bm{\xi}\bm{\xi}^{\mathsf{T}} \\
    c &= \bm{\xi}^{\mathsf{T}}\bm{H}\bm{\xi}.
\end{align*}
Starting with $\bm{A}$, we can easily see that any $(i,j)$ component can be written as $\bm{A} = \sum_{n=1}^{d_{2}} \bm{\xi}_{i}\bm{\xi}_{j}\bm{\xi}_{n}\bm{g}_{n}$. Therefore, the expectation under $\bm{\xi} \sim \mathcal{N}(\bm{0}, \sigma^{2}\bm{I})$ can be derived as: 
\begin{align*}
    \mathbb{E}_{\bm{\xi} \sim \mathcal{N}(\bm{0}, \sigma^{2}\bm{I})}\left[\bm{\xi}_{i}\bm{\xi}_{j}\bm{\xi}_{n}\bm{g}_{n}\right] = \bm{g}_{n}\mathbb{E}_{\bm{\xi}_{i}\sim \mathcal{N}(0, \sigma^{2})}\left[\bm{\xi}_{i}^{3}\right] = 0 \ \ \ \text{if $i = j = n$ and 0 otherwise.}
\end{align*}
Thus, we conclude that $\mathbb{E}_{\bm{\xi}\sim\mathcal{N}(\bm{0}, \sigma^{2}\bm{I})}[\bm{A}] = \bm{0}_{d_{2} \times d_{2}}$.

Continuing with the second term, i.e., $\bm{B}$, we realise that any $(i,j)$ component can be written as $\bm{B}_{i,j} = \sum_{n=1}^{d_{2}}\sum_{m=1}^{d_{2}}\bm{\xi}_{i}\bm{\xi}_{j}\bm{\xi}_{n}\bm{\xi}_{m}\bm{H}_{m,n}$. Now, we consider two cases: 
\begin{itemize}
    \item \underline{Diagonal Elements (i.e., when $i=j$):} The expectation under $\bm{\xi} \sim \mathcal{N}(\bm{0}, \sigma^{2}\bm{I})$  can be further split in three sub-cases
    \begin{itemize}
        \item \underline{Sub-Case I when $i=j=m=n$}: We have $\mathbb{E}_{\bm{\xi}\sim\mathcal{N}(0, \sigma^{2}\bm{I})}\left[\bm{\xi}_{i}\bm{\xi}_{j}\bm{\xi}_{n}\bm{\xi}_{m}\bm{H}_{m,n}\right] = \bm{H}_{i,i}\mathbb{E}_{\bm{\xi}_{i}\sim\mathcal{N}(0, \sigma^{2})} = 3\sigma^{4}\bm{H}_{i,i}.$
        \item \underline{Sub-Case II when $i=j \neq m = n$:} We have $\mathbb{E}_{\bm{\xi}\sim\mathcal{N}(0, \sigma^{2}\bm{I})}\left[\bm{\xi}_{i}\bm{\xi}_{j}\bm{\xi}_{n}\bm{\xi}_{m}\bm{H}_{m,n}\right] = \bm{H}_{m,m}\mathbb{E}_{\bm{\xi} \sim \mathcal{N}(\bm{0}, \sigma^{2}\bm{I})}\left[\bm{\xi}_{i}^{2}\bm{\xi}_{m}^{2}\right] = \sigma^{4}\bm{H}_{m,m}.$
        \item \underline{Sub-Case III when indices are all distinct:} We have $\mathbb{E}_{\bm{\xi}\sim\mathcal{N}(0, \sigma^{2}\bm{I})}\left[\bm{\xi}_{i}\bm{\xi}_{j}\bm{\xi}_{n}\bm{\xi}_{m}\bm{H}_{m,n}\right] = {0}.$
    \end{itemize}
    \begin{tcolorbox}[enhanced, arc=0pt,outer arc=0pt, colback=white, colframe=black, drop shadow={black,opacity=1}]
\textbf{\underline{Diagonal Elements Conclusion:}}
Using the above results we conclude that $\mathbb{E}_{\bm{\xi} \sim \mathcal{N}(\bm{0}, \sigma^{2}\bm{I})}[\bm{B}_{i,i}] = 2 \sigma^{4}\bm{H}_{i,i} + \sigma^{4}\text{trace}(\bm{H})$. 
\end{tcolorbox}
    \item \underline{Off-Diagonal Elements (i.e., when $i\neq j$):} The above analysis is now repeated for computing the expectation of the off-diagonal elements of matrix $\bm{B}$. Similarly, this can also be split into three sub-cases depending on indices: 
    \begin{itemize}
        \item \underline{Sub-Case I when $i = m \neq j = n$:} We have $\mathbb{E}_{\bm{\xi}\sim\mathcal{N}(\bm{0}, \sigma^{2}\bm{I})}\left[\bm{\xi}_{i}\bm{\xi}_{j}\bm{\xi}_{n}\bm{\xi}_{m}\bm{H}_{m,n}\right] = \bm{H}_{i,j}\mathbb{E}_{\bm{\xi}\sim \mathcal{N}(\bm{0}, \sigma^{2}\bm{I})}\left[\bm{\xi}_{i}^{2}\bm{\xi}_{j}^{2}\right] = \sigma^{4}\bm{H}_{i,j}.$
        \item \underline{Sub-Case II when $i = n \neq j = m$:} We have $\mathbb{E}_{\bm{\xi}\sim\mathcal{N}(\bm{0}, \sigma^{2}\bm{I})}\left[\bm{\xi}_{i}\bm{\xi}_{j}\bm{\xi}_{n}\bm{\xi}_{m}\bm{H}_{m,n}\right] = \bm{H}_{j,i}\mathbb{E}_{\bm{\xi}\sim \mathcal{N}(\bm{0}, \sigma^{2}\bm{I})}\left[\bm{\xi}_{i}^{2}\bm{\xi}_{j}^{2}\right] = \sigma^{4}\bm{H}_{j, i}.$
        \item \underline{Sub-Case III when indices are all distinct:} We have $\mathbb{E}_{\bm{\xi}\sim\mathcal{N}(\bm{0}, \sigma^{2}\bm{I})}\left[\bm{\xi}_{i}\bm{\xi}_{j}\bm{\xi}_{n}\bm{\xi}_{m}\bm{H}_{m,n}\right] = 0.$
    \end{itemize}
\begin{tcolorbox}[enhanced, arc=0pt,outer arc=0pt, colback=white, colframe=black, drop shadow={black,opacity=1}]
    \textbf{\underline{Off-Diagonal Elements Conclusion:}}
Using the above results and due to the symmetric properties of $\bm{H}$, we conclude that $\mathbb{E}_{\bm{\xi} \sim \mathcal{N}(\bm{0}, \sigma^{2}\bm{I)}}\left[\bm{B}_{i,j}\right] = 2\sigma^{4}\bm{H}_{i,j}$
\end{tcolorbox}
\end{itemize}
Finally, analysing $c$, one can realise that $\mathbb{E}_{\bm{\xi}\sim \mathcal{N}(\bm{0}, \sigma^{2}\bm{I})}[c] = \mathbb{E}_{\bm{\xi}\sim \mathcal{N}(\bm{0}, \sigma^{2}\bm{I})}\left[\sum_{i=1}^{d_{2}}\sum_{j=1}^{d_{2}}\bm{\xi}_{i}\bm{\xi}_{j}\bm{H}_{i,j}\right]=\sigma^{2}\text{trace}(\bm{H})$. 

Substituting the above conclusions back in the original approximation in Equation~\ref{Eq:HessianApprox}, and using the linearity of the expectation we can easily achieve the statement of the proposition.
\end{proof}

With the above two propositions, we can now perform the updates in Algorithm~\ref{Algo:Main} without the need for performing explicit model learning. This is true as Propositions 6 and 7 devise procedure where gradient and Hessian estimates can be simply based on simulator value evaluations while perturbing $\bm{\phi}$ and $\bm{\phi}_{0}$. It is important to note that in order to apply the above, we are required to be able to evaluate $\mathbb{E}_{(\bm{s},\bm{a})\sim \pi(\cdot)\rho_{\pi}^{\bm{\phi}_{0}}(\cdot)}\left[\mathcal{W}_{(\bm{s}, \bm{a})}(\bm{\phi}_{0})\right]$ under random $\bm{\xi}$ perturbations sampled from $\mathcal{N}(\bm{0}, \sigma^{2}\bm{I})$. An empirical estimate of the $p$-Wasserstein distance between two measures $\mu$ and $\nu$ can be performed by computing the $p$-Wasserstein distance between the empirical distributions evaluated at sampled data. That is, one can approximation $\mu$ by $\mu_{n} = \frac{1}{n} \sum_{i=1}^{n}\delta_{\bm{x}_{i}}$ where $\bm{x}_{i}$ are identically and independently distributed according to $\mu$. Approximating $\nu_{n}$ similarly, we then realise that\footnote{In case the dynamics are assumed to be Gaussian, a similar procedure can be followed or a closed form can be used, see \cite{Takatsu_wassersteingeometry}.} $\mathcal{W}_{2}(\mu, \nu) \approx \mathcal{W}_{2}(\mu_{n}, \nu_{n})$.

\section{Related work}\label{Sec:Related_work}
In this section we review some of the related literature. There is a common theme running through previous works: the formulation of the problem as a max-min (or min-max, depending on whether the goal is to maximise for rewards or minimise for costs) problem. This game-theoretic view is natural formulation when viewing nature as an adversary. Thus, it will be no surprise that the papers discussed below mainly take this approach or close variations of it.

There is a long-standing thread of research on robustness in the classical control community, and the literature in this area is vast, with the $\mathcal{H}_\infty$ method being a standard approach \cite[]{doyle2013feedback}. This approach was introduced into reinforcement learning by \cite{Morimoto:2005:RRL:1119345.1119349}. In that paper, a continuous time reinforcement learning setting was studied for which a max-min problem was formulated involving a modified value function, the optimal solutions of which can be determined by solving Hamilton-Jacobi-Isaacs (HJI) equation. 

There is also a line of work on robust MDPs, amongst which are \cite{iyengar2005robust, nilim2005robust, wiesemann2013robust, tirinzoni2018policy, petrik2019beyond}. In particular, \cite{yang2017convex} uses the Wasserstein distance to define uncertainty sets of dynamics in similar way to this work, that is, in an $\epsilon$-ball around a particular dynamics (referred to as \emph{nominal} distribution in that paper). The paper shows that an optimal Markov control policy is possible the max-min Bellman equations and shows how convex-optimisation techniques can be applied to solve it.  

Whilst valuable in their own right, these approaches are not sufficient for the RL setting due to the need in the latter case to give efficient solutions for large state and action spaces, and the fact that the dynamics are not known \emph{a priori}. We emphasise once again that in our setting, cannot explicitly define the MDP of the reference dynamics, since we do not assume knowledge of it. We assume only that we can sample from it, which is the standard assumption made in RL.

Reasonably, one might expect that model-based reinforcement learning may be a plausible route to address robustness. In \cite{pmlr-v80-asadi18a}, the learning of Lipschitz continuous models is addressed, and a bound on multi-step prediction error is given in terms of the Wasserstein distance. However, the major stumbling-block with model-based RL techniques is that in high-dimensional state  building models that are sufficient for controlling an agent can suffer greatly from model mis-specification or excessive computational costs (e.g., as in training a Gaussian Process, see, e.g., \cite{rasmussen2003gaussian, deisenroth2011pilco}). 

We now discuss some papers closer in objective and/or technique to our own. 
\cite{rajeswaran2016epopt} approaches the robustness problem by training on an ensemble of dynamics in order to be deployed on a target environment. The algorithm introduced, Ensemble Policy Optimisation (EPOpt), alternates between two phases: (i) given a distribution over dynamics for which simulators (or models) are available (the source domain), train a policy that performs well for the whole distribution; (ii) gather data from the deployed environment (target domain) to adapt the distribution. The objective is not max-min, but a softer variation defined by conditional value-at-risk (CVaR). The algorithm samples a set of dynamics $\{\bm{\phi}_k\}$ from a distribution over dynamics $\mathcal{P}_{\bm{\psi}}$, and for each dynamics $\bm{\phi}_k$, it samples a trajectory using the current policy parameter $\bm{\theta}_i$. It then selects the worst performing $\epsilon$-fraction of the trajectories to use to update the policy parameter. Clearly this process bears some resemblance to our algorithm, but there is a crucial difference: our algorithm takes \emph{descent steps} in the $\bm{\phi}$ space. The difference if important when the dynamics parameters sit in a high-dimensional space, since in that case, optimisation-from-sampling could demand a considerable number of samples. A counter argument against our technique might be that our zero'th-order method for estimating gradients and Hessians also requires sampling in high dimensions. This is, indeed the case, but obtaining localised estimates (as gradients and Hessians are local properties) could be easier than global properties (the worse set of parameters in the high-dimensional space). In any case, our experiments demonstrate our algorithm performs well even in these high dimensions. The experiments of \cite{rajeswaran2016epopt} are on Hopper and HalfCheetah, in which their algorithm is compared to TRPO.  We note that we were were unable to find the code for this paper, and did not attempt to implement it ourselves.

The CVaR criterion is also adopted in \cite{pmlr-v70-pinto17a}, in which, rather than sampling trajectories and finding a quantile in terms of performance, two policies are trained simultaneously: a ``protagonist'' which aims to optimise performance, and an adversary which aims to disrupt the protagonist. The protagonist and adversary train alternatively, with one being fixed whilst the other adapts. The action space for the adversary, in the tests documented in the paper includes forces on the entities (InvertedPendulum, HalfCheetah, Swimmer, Hopper, Walker2D) that aim to destabalise it. We made comparisons against this algorithm in our experiments.

More recently, \cite{pmlr-v97-tessler19a} studies robustness with respect to action perturbations. There are two forms of perturbation addressed: (i) Probabilistic Action Robust MDP (PR-MDP), and (ii) Noisy Action Robust MDP (NR-MDP). In PR-MDP, when an action is taken by an agent, with probability $\alpha$, a different, possibly adversarial action is taken instead. In NR-MDP, when an action is taken, a perturbation is added to the action itself. Like \cite{rajeswaran2016epopt} and \cite{pmlr-v70-pinto17a}, the algorithm is suitable for applying deep neural networks, and the paper reports experiments on InvertedPendulum, Hopper, Walker2d and Humanoid. We tested against PR-MDP in some of our experiments, and found it to be lacking in robustness (see Section \ref{Sec:Exps}, Figure \ref{fig:myfig} and Figure \ref{fig:my_label}). 

In \cite{lecarpentier2019non} a non-stationary Markov Decision Process model is considered, where the dynamics can change from one time step to another. The constraint is based on Wasserstein distance, specifically, the Wasserstein distance between dynamics at time $t$ and $t'$ is bounded by $L|t-t'|$, i.e., is $L$-Lipschitz with respect to time, for some constant $L$. They approach the problem by treating nature as an adversary and implement a Minimax algorithm. The basis of their algorithm is that due to the fact that the dynamics changes slowly (due to the Lipschitz constraint), a planning algorithm can project into the future the scope of possible future dynamics and plan for the worst. The resulting algorithm, known as \emph{Risk Averse Tree Search}, is - as the name implies - a tree search algorithm. It operates on a sequence ``snapshots'' of the evolving MDP, which are instances of the MDP at points in time. The algorithm is tested on small grid world, and does not appear to be readily extendible to the continuous state and action scenarios our algorithm addresses. 

To summarise, our paper uses the Wasserstein distance for quantifying variations in possible dynamics, in common with \cite{lecarpentier2019non}, but is suited to applying deep neural networks for continuous state and action spaces. Our algorithm does not require a full dynamics available to it, merely a parameterisable dynamics. It competes well with the above papers, and operates well for high dimensional problems, as evidenced by the experiments.

\section{Experiments \& Results}\label{Sec:Exps}
We evaluate $\text{W}\text{R}^{2}\text{L}$ on a variety of continuous control benchmarks from the MuJoCo environment. Dynamics in our benchmarks were parameterised by variables defining physical behaviour, e.g., density of the robot's torso, friction of the ground, and so on. We consider both low and high dimensional dynamics and demonstrate that our algorithm outperforms state-of-the-art from both standard and robust reinforcement learning. We are chiefly interested in policy generalisation across environments with varying dynamics, which we measure using average test returns on novel systems. The comparison against standard reinforcement learning algorithms allows us to understand whether lack of robustness is a critical challenge for sequential decision making, while comparisons against robust algorithms test if we outperform state-of-the-art that considered a similar setting to ours. From standard algorithms, we compare against proximal policy optimisation (PPO)~\citep{schulman2017proximal}, and trust region policy optimisation (TRPO)~\citep{DBLP:journals/corr/SchulmanLMJA15}; an algorithm based on natural actor-crtic~\citep{Peters:2008:NA:1352927.1352986, DBLP:journals/corr/abs-1902-02823}. From robust algorithms, we demonstrate how $\text{W}\text{R}^{2}\text{L}$ favours against robust adversarial reinforcement learning (RARL)~\citep{pmlr-v70-pinto17a}, and action-perturbed Markov decision processes (PR-MDP) proposed in~\citep{pmlr-v97-tessler19a}. 

It is worth noting that we attempted to include deep deterministic policy gradients (DDPG)~\citep{Silver:2014:DPG:3044805.3044850} in our comparisons. Results including DDPG were, however, omitted as it failed to show any significant robustness performance even on relatively simple systems, such as the inverted pendulum; see results reported in Appendix~\ref{App:InvBar}. During initial trials, we also performed experiments parameterising models using deep neural networks. Results demonstrated that these models, though minimising training data error, fail to provide valid physics-grounded dynamics. For instance, we arrived at inverted pendula models that vary pole angles without exerting any angular speeds. This problem became even more apparent in high-dimensional systems, e.g., Hopper, Walker, etc due to the increased number of possible minima. As such, results presented in this section make use of our zero-order method that can be regarded as a scalable alternative for robust solutions.

\subsection{MuJoCo benchmarks} \label{sec:mujoco}
Contrary to other methods rooted in model-based reinforcement learning, we evaluate our method both in low and \emph{high-dimensional} MuJuCo tasks \citep{Mujuco}. We consider a variety of systems including CartPole, Hopper, and Walker2D; all of which require direct joint-torque control. Keeping with the generality of our method, we utilise these dynamical as-is with no additional alterations. Namely, we use the exact setting of these benchmarks as that shipped with OpenAI gym without any reward shaping, state-space augmentation, feature extraction, or any other modifications of-that-sort. For clarity, we summarise variables parameterising dynamics in Table \ref{table:dynamic_param}, and detail specifics next. 

\paragraph{CartPole:} The goal of this classic control benchmark is to balance a pole by driving a cart along a rail. The state space is composed of the position $x$ and velocity $\Dot{x}$ of the cart, as well as the angle $\theta$ and angular velocities of the pole $\Dot{\theta}$. We consider two termination conditions in our experiments: 1) pole deviates from the upright position beyond a pre-specified threshold, or 2) cart deviates from its zeroth initial position beyond a certain threshold. To conduct robustness experiments, we parameterise the dynamics of the CartPole by the pole length $l_p$, and test by varying $l_{p} \in [0.3, 3]$. 

\textbf{Hopper:} In this benchmark, the agent is required to control a hopper robot to move forward without falling. The state of the hopper is represented by positions, $\{x, y, z\}$, and linear velocities, $\{\Dot{x}, \Dot{y}, \Dot{z}\}$, of the torso in global coordinate, as well as angles, $\{\theta_i\}_{i=0}^2$, and angular speeds, $\{\Dot{\theta}_i\}_{i=0}^2$, of the three joints. During training, we exploit an early-stopping scheme if ``unhealthy'' states of the robot were visited. Parameters characterising dynamics included densities $\{\rho_i\}_{i=0}^3$ of the four links, armature $\{a_i\}_{i=0}^2$ and damping $\{\zeta_i\}_{i=0}^2$ of three joints, and the friction coefficient $\mu_g$. To test for robustness, we varied both frictions and torso densities leading to significant variations in dynamics. We further conducted additional experiments while varying all 11 dimensional specification parameters.  

\textbf{Walker2D:} This benchmark is similar to Hopper except that the controlled system is a biped robot with seven bodies and six joints. Dimensions for its dynamics are extended accordingly as reported in Table \ref{table:dynamic_param}. Here, we again varied the torso density for performing robustness experiments in the range $\rho_{0} \in [500, 3000]$.  

\textbf{Halfcheetah:} This benchmark is similar to the above except that the controlled system is a two-dimensional slice of a three-dimensional cheetah robot. Parameters specifying the simulator consist of 21 dimensions, with 7 representing densities. In our two-dimensional experiments we varied the torso-density and floor friction, while in high-dimensional ones, we allowed the algorithm to control all 21 variables.

\subsection{Experimental protocol}
\begin{table}
\renewcommand{\arraystretch}{1.2}
\begin{tabular}{c|ccc} 
     &  1D experiment & 2D experiment & High-dimensional experiment \\ \hline
     Inverted Pendulum & $l_p$ & None & None\\
     Hopper & $\rho_0$ & $\{\rho_0, \mu_g\}$ & $\{\rho_i\}_{i=0}^3 \cup \{a_i\}_{i=0}^2 \cup \{\zeta_i\}_{i=0}^2 \cup \mu_g$ \\
     Walker2D & $\rho_0$ & $\{\rho_0, \mu_g\}$ & $\{\rho_i\}_{i=0}^6 \cup \{a_i\}_{i=0}^5 \cup \{\zeta_i\}_{i=0}^5 \cup \mu_g$ \\
    HalfCheetah & None & $\{\rho_0, \mu_g\}$ & $\{\rho_i\}_{i=0}^7 \cup \{a_i\}_{i=0}^5 \cup \{\zeta_i\}_{i=0}^5 \cup \mu_g$ \\
     [0.3cm]
\end{tabular}
\caption{Parameterisation of dynamics. See section \ref{sec:mujoco} for the physical meaning of these parameters.}
\label{table:dynamic_param}
\end{table}

Our experiments included training and a testing phases. During the training phase we applied Algorithm \ref{Algo:Main} for determining robust policies while updating transition model parameters according to the min-max formulation. Training was performed independently for each of the algorithms on the relevant benchmarks while ensuring best operating conditions using hyper-parameter values reported elsewhere \citep{schulman2017proximal, pmlr-v70-pinto17a, pmlr-v97-tessler19a}.   

For all benchmarks, policies were represented using parametrised Gaussian distributions with their means given by a neural network and standard derivations by a group of free parameters. The neural network consisted of two hidden layers with 64 units and hyperbolic tangent activations in each of the layers. The final layer exploited linear activation so as to output a real number. Following the actor-critic framework, we also trained a standalone critic network having the same structure as that of the policy. 

For each policy update, we rolled-out in the current worst-case dynamics to collect a number of transitions. The number associated to these transitions was application-dependent and varied between benchmarks in the range of 5,000 to 10,000. The policy was then optimised (i.e., Phase II of Algorithm \ref{Algo:Main}) using proximal policy optimization with a generalised advantage estimation. To solve the minimisation problem in the inner loop of Algorithm \ref{Algo:Main}, we sampled a number of dynamics from a diagonal Gaussian distribution that is centered at the current worst-case dynamics model. The number of sampled dynamics and the variance of the sampled distributions depended on both the benchmark itself, and well as the dimensions of the dynamics. Gradients needed for model updates were estimated using the results in Propositions 7 and 8. Finally, we terminated training when the policy entropy dropped below an application-dependent threshold.

When testing, we evaluated policies on unseen dynamics that exhibited simulator variations as described earlier. We measured performance using returns averaged over 20 episodes with a maximum length of 1,000 time steps on testing environments. We note that we used non-discounted mean episode rewards to compute such averages.

\subsection{Comparison Results \& Benchmarking}
This section summarises robustness results showing that our method significantly outperforms others from both standard and robust reinforcement learning in terms of average testing returns as dynamics vary. 

\paragraph{Results with One-Dimensional Model Variation:} Figure \ref{fig:myfig} shows the robustness of policies on a simple inverted pendulum while varying the pole length in the ranges from $0.3$ to $3.0$. For a fair comparison, we trained two standard policy gradient methods (TRPO \citep{DBLP:journals/corr/SchulmanLMJA15} and PPO \citep{schulman2017proximal}), and two robust RL algorithms (RARL \citep{pmlr-v70-pinto17a}, PR-MDP \citep{pmlr-v97-tessler19a}) with the reference dynamics preset by our algorithm. The range of evaluation parameters was intentionally designed to include dynamics out of the $\epsilon$-Wasserstein ball. Clearly, $\text{W}\text{R}^{2}\text{L}$ outperforms all baselines in this  benchmark. 
\begin{wrapfigure}{r}{0.58\linewidth}
\centering
 \includegraphics[scale=.58, trim = {0em 0em 0em 0em}, clip]{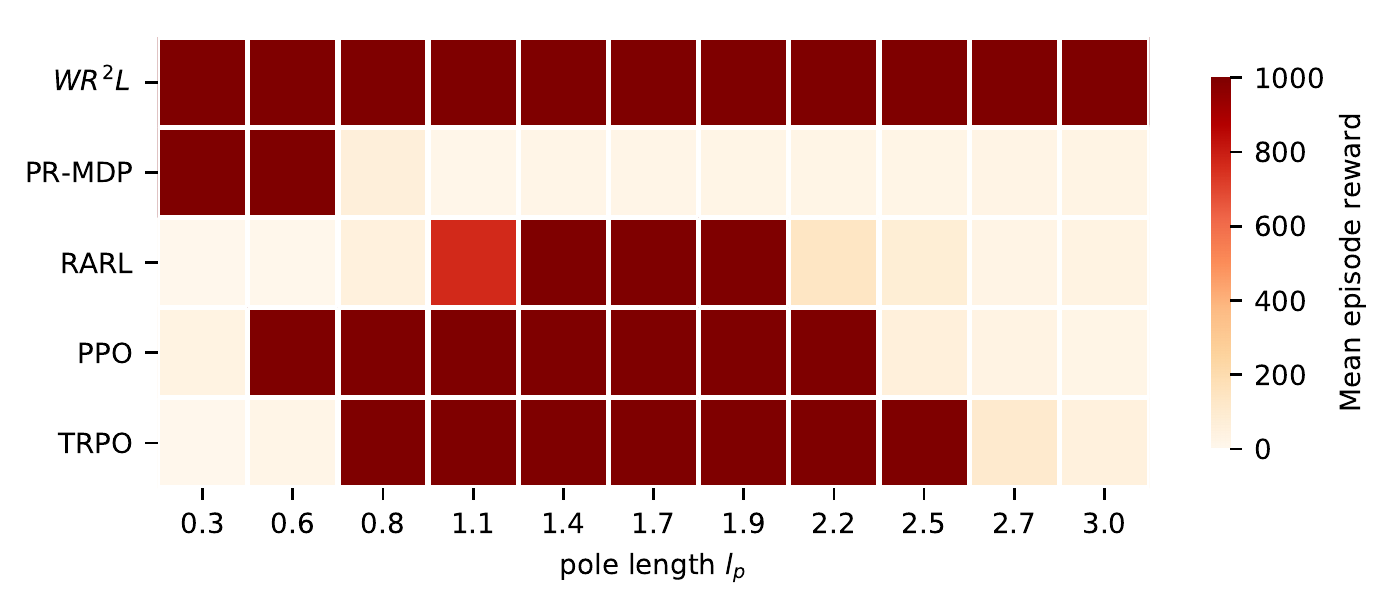}
\caption{Robustness results on the inverted pendulum demonstrating that our method outperforms state-of-the-art in terms of average test returns.}
\label{fig:myfig}
\end{wrapfigure}

Given successful behaviour in low-dimensional state representations, we performed additional experiments on the Hopper and Walker systems to assess robustness against model changes in high-dimensional environments. Figure \ref{fig:my_label} illustrates these results depicting that our method is again capable of outperforming others including RARL and PR-MDP. It is also interesting to realise that in high-dimensional environments, our algorithm exhibits a trade-off between robustness and optimality due to the min-max definition of $\text{W}\text{R}^{2}\text{L}$'s objective. 

\begin{tcolorbox}[enhanced, arc=0pt,outer arc=0pt, colback=white, colframe=black, drop shadow={black,opacity=1}]
\textbf{Experimental Conclusion I:} From the above, we conclude that $\text{W}\text{R}^{2}\text{L}$ outperforms others when one-dimensional simulator variations are considered. 
\end{tcolorbox}

\begin{figure}[h!]
    \centering
    \includegraphics[scale = .4, trim = {0em 27em 0em 27em}, clip]{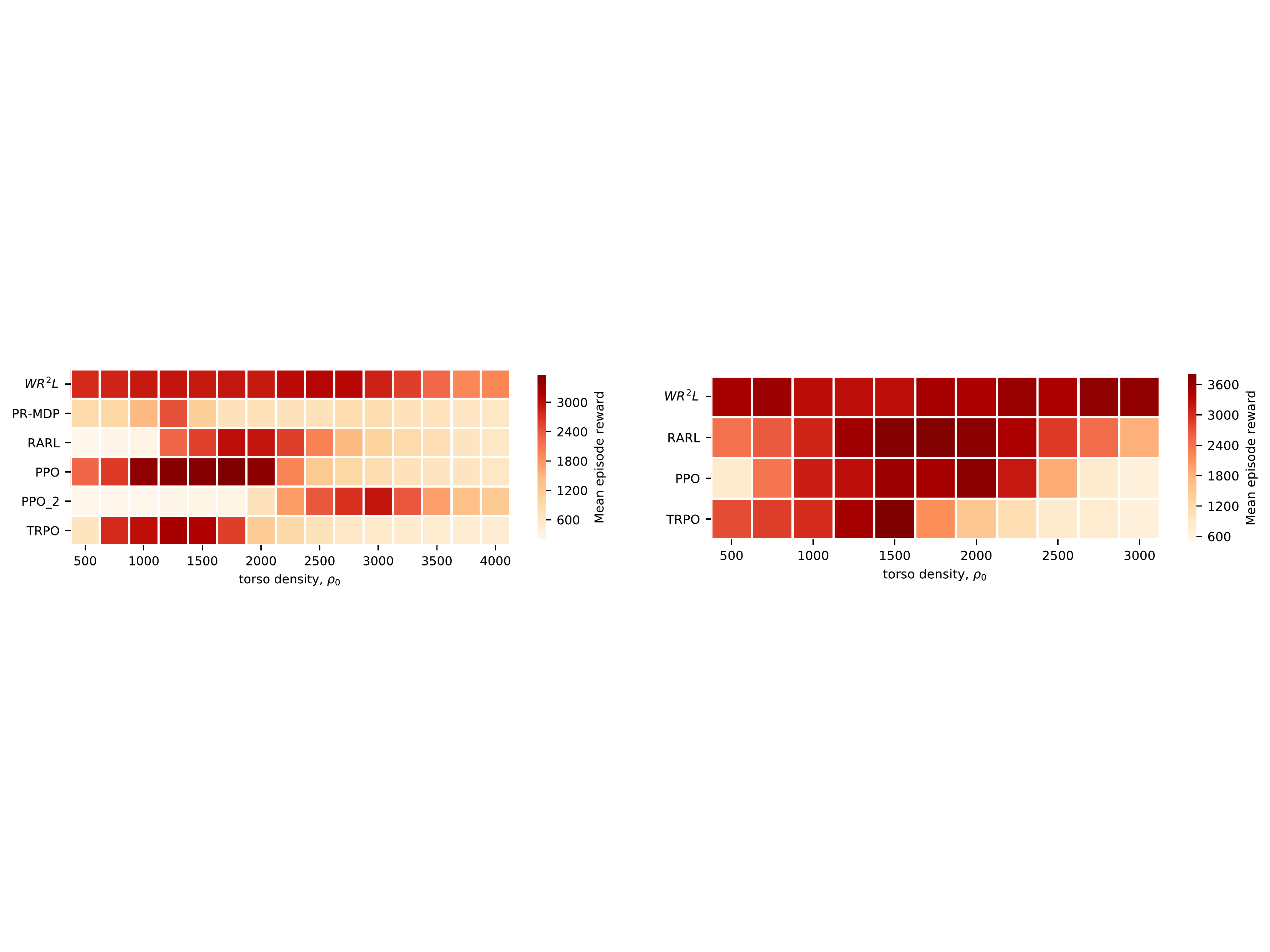}
    \caption{Robustness results on Hopper (left) and Walker (right) systems demonstrating that our method outperforms others significantly in terms of average test returns as torso densities vary. It is also interesting to realise that due to the robust problem formulation, our algorithm exhibits a trade-off between optimality and generalisation. Hopper results are with a reference $\rho_0=1750$; PPO$_2$ uses the same implementation as PPO but trained with $\rho_0=3000$. Walker results are attained with a reference model of $\rho_0=1750$. }
    \label{fig:my_label}
\end{figure}

\begin{figure}[h!]
     \centering
     \begin{subfigure}[b]{0.3\textwidth}
         \centering
         \includegraphics[width=\textwidth]{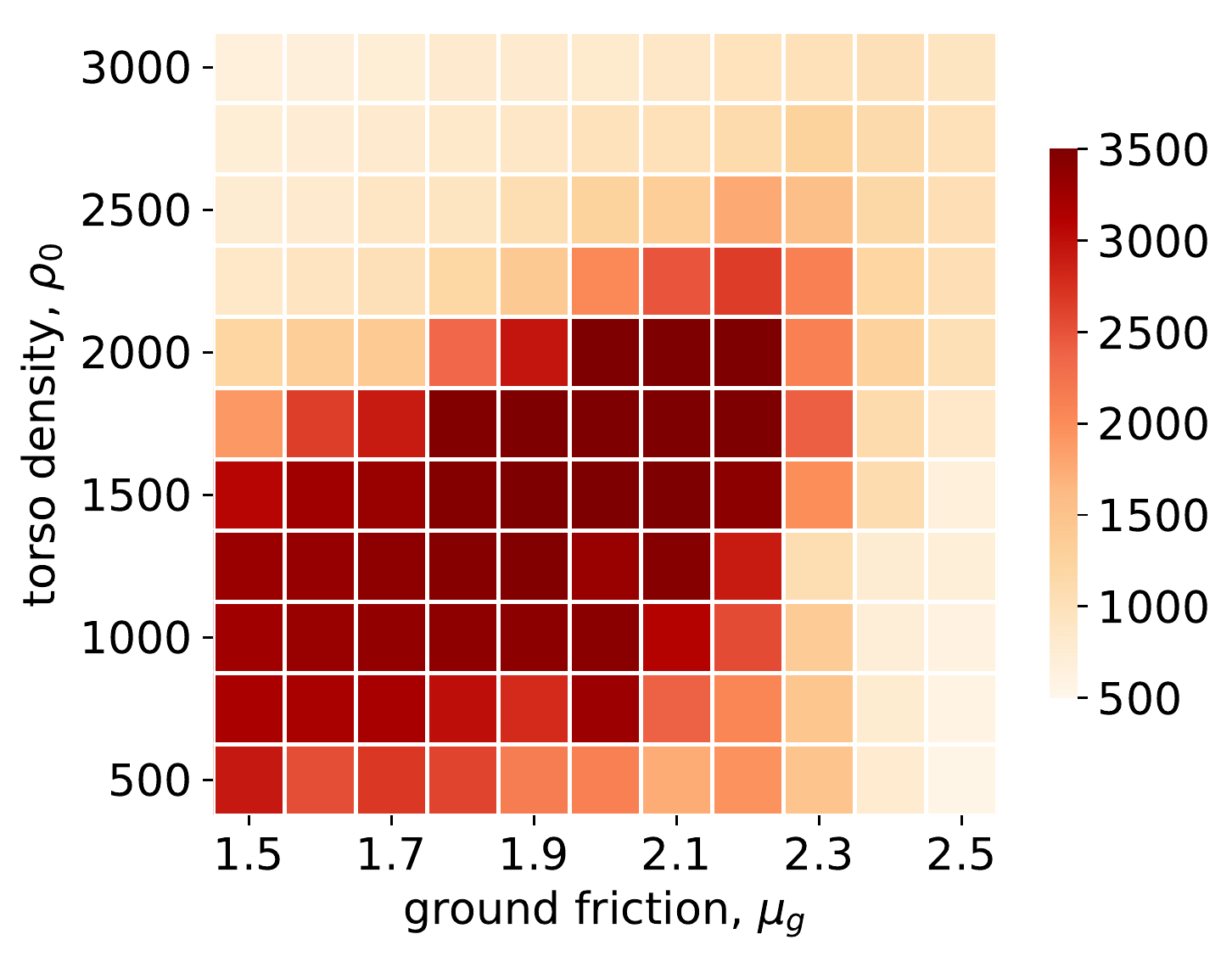}
         \caption{Hopper - $\epsilon=0$}
     \end{subfigure}
     \hfill
     \begin{subfigure}[b]{0.3\textwidth}
         \centering
         \includegraphics[width=\textwidth]{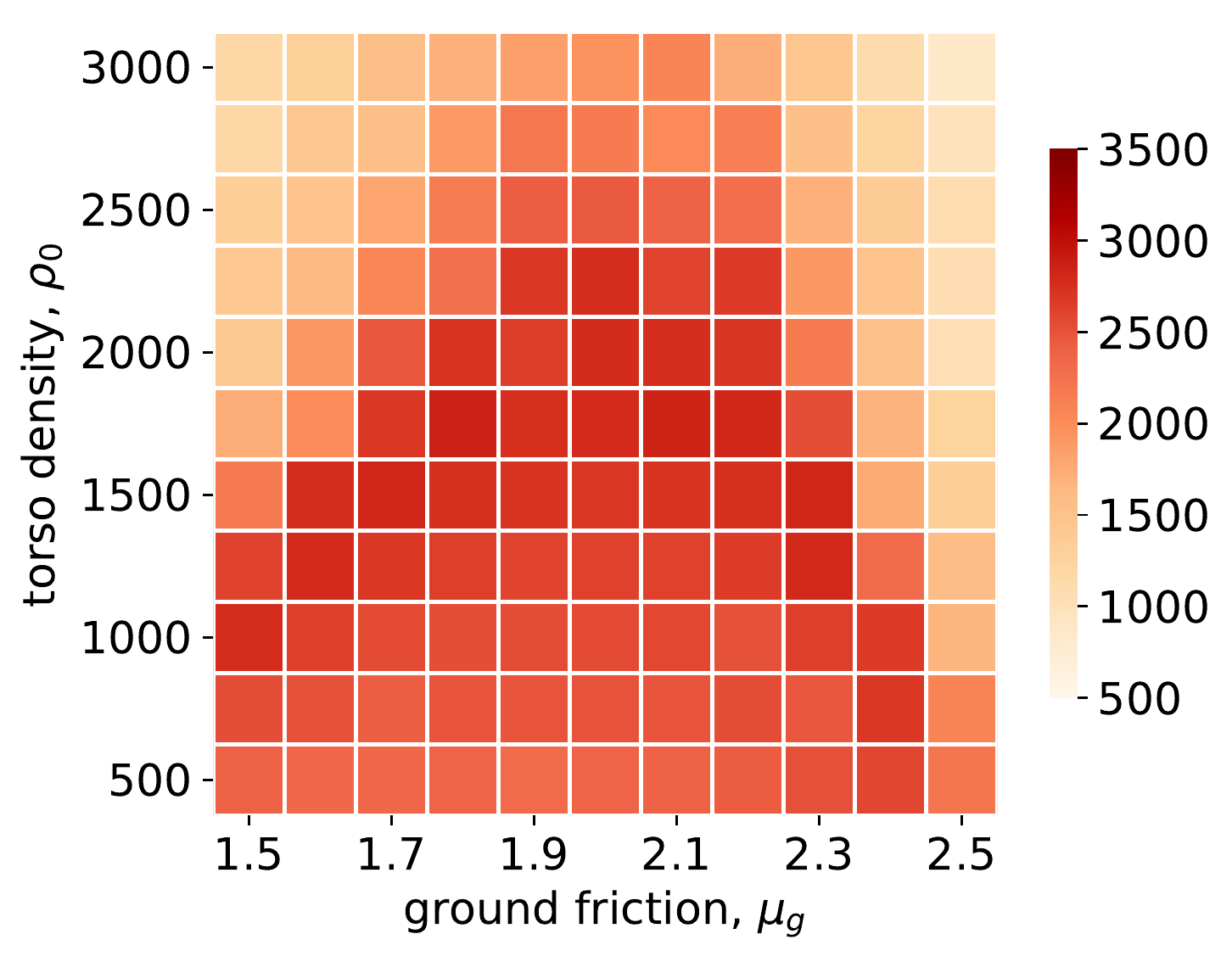}
         \caption{Hopper - $\epsilon=0.003$}
     \end{subfigure}
     \hfill
     \begin{subfigure}[b]{0.3\textwidth}
         \centering
         \includegraphics[width=\textwidth]{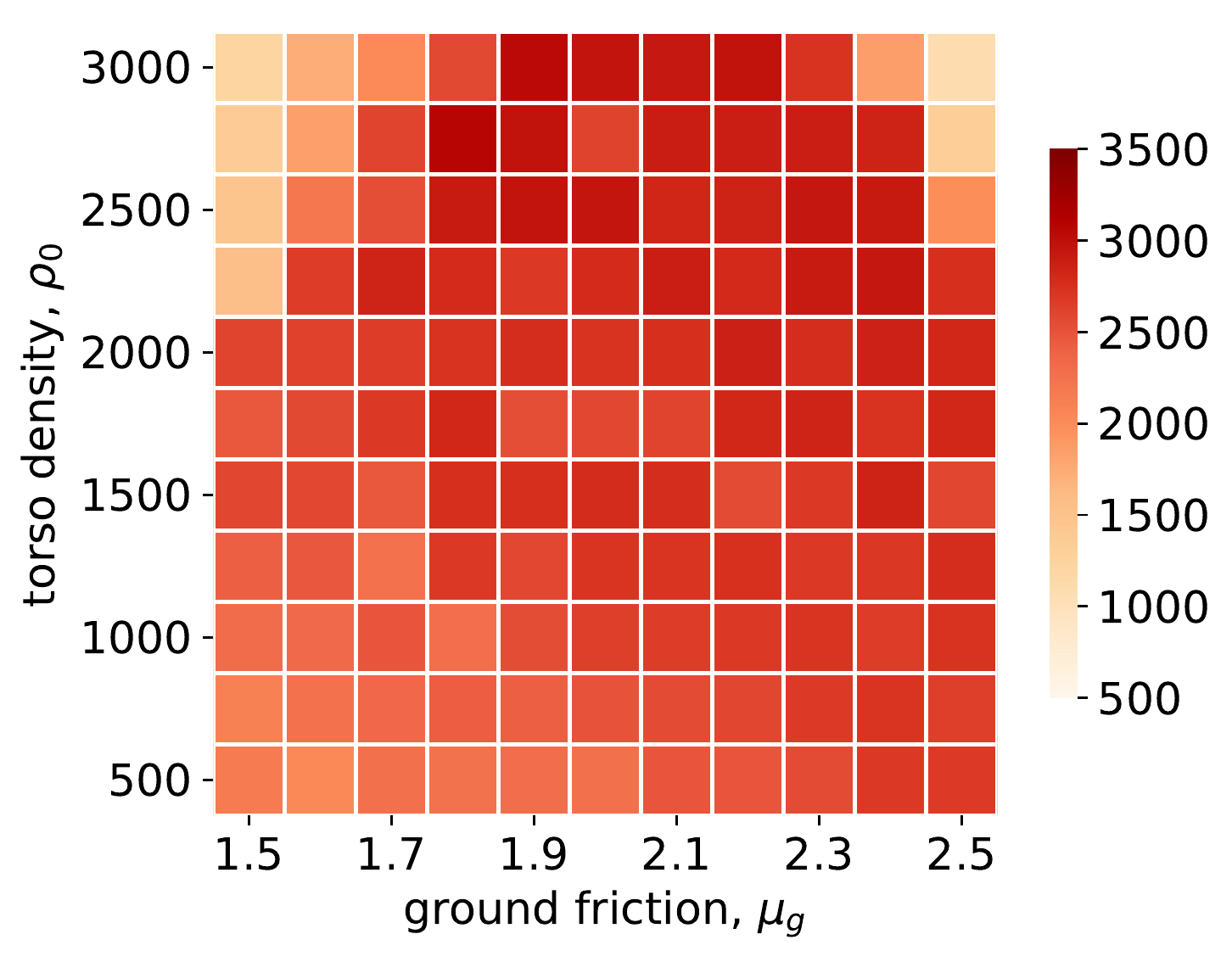}
         \caption{Hopper - $\epsilon=0.015$}
         \end{subfigure}
    \hfill
    \begin{subfigure}[b]{0.3\textwidth}
         \centering
         \includegraphics[width=\textwidth]{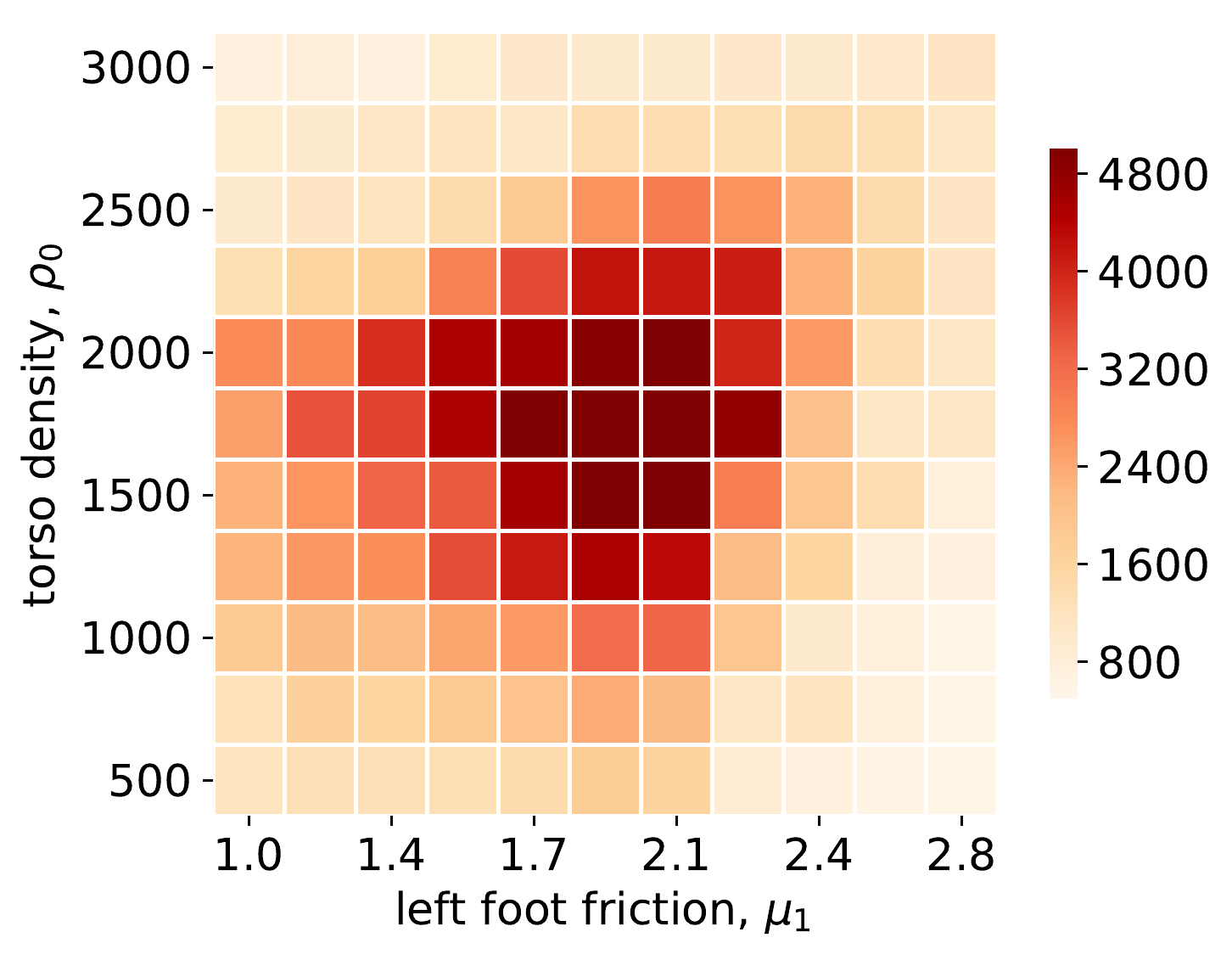}
         \caption{Walker - $\epsilon=0$}
         \end{subfigure}
         \hfill
    \begin{subfigure}[b]{0.3\textwidth}
         \centering
         \includegraphics[width=\textwidth]{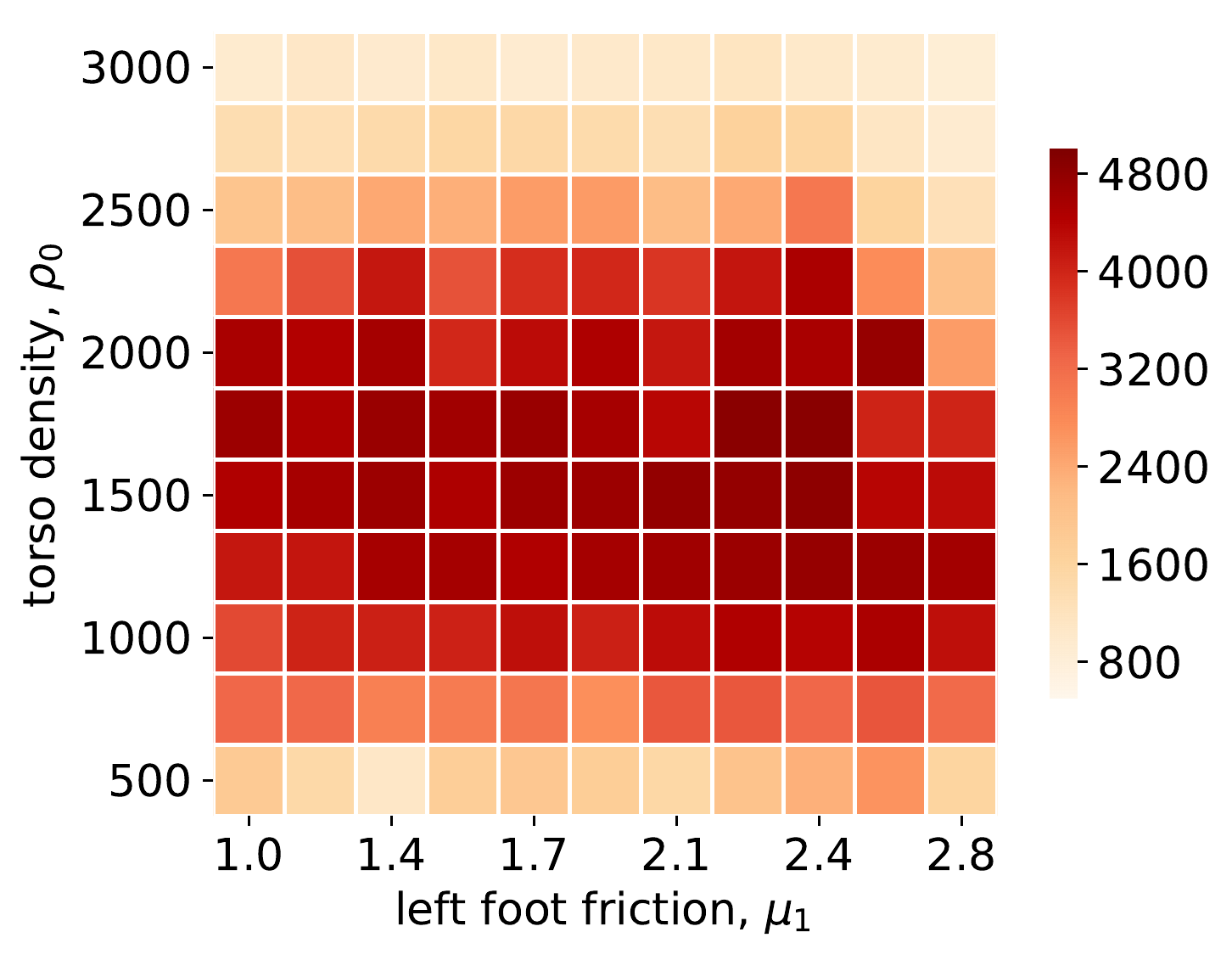}
         \caption{Walker - $\epsilon=0.1$}     
    \end{subfigure}
    \hfill     
    \begin{subfigure}[b]{0.3\textwidth}
         \centering
         \includegraphics[width=\textwidth]{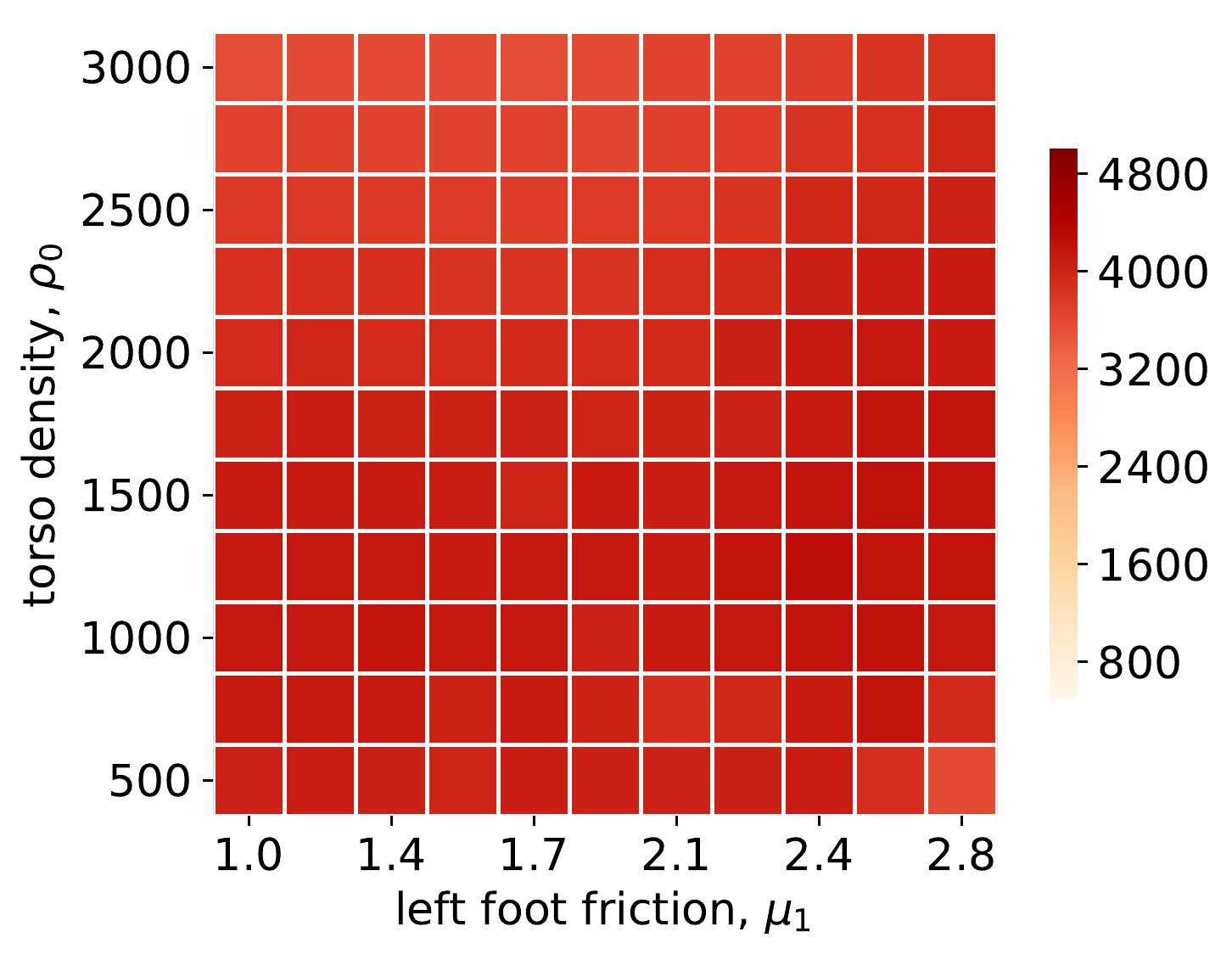}
         \caption{Walker - $\epsilon=1.0$}   
     \end{subfigure}         
     \hfill     
    \begin{subfigure}[b]{0.3\textwidth}
         \centering
         \includegraphics[width=\textwidth]{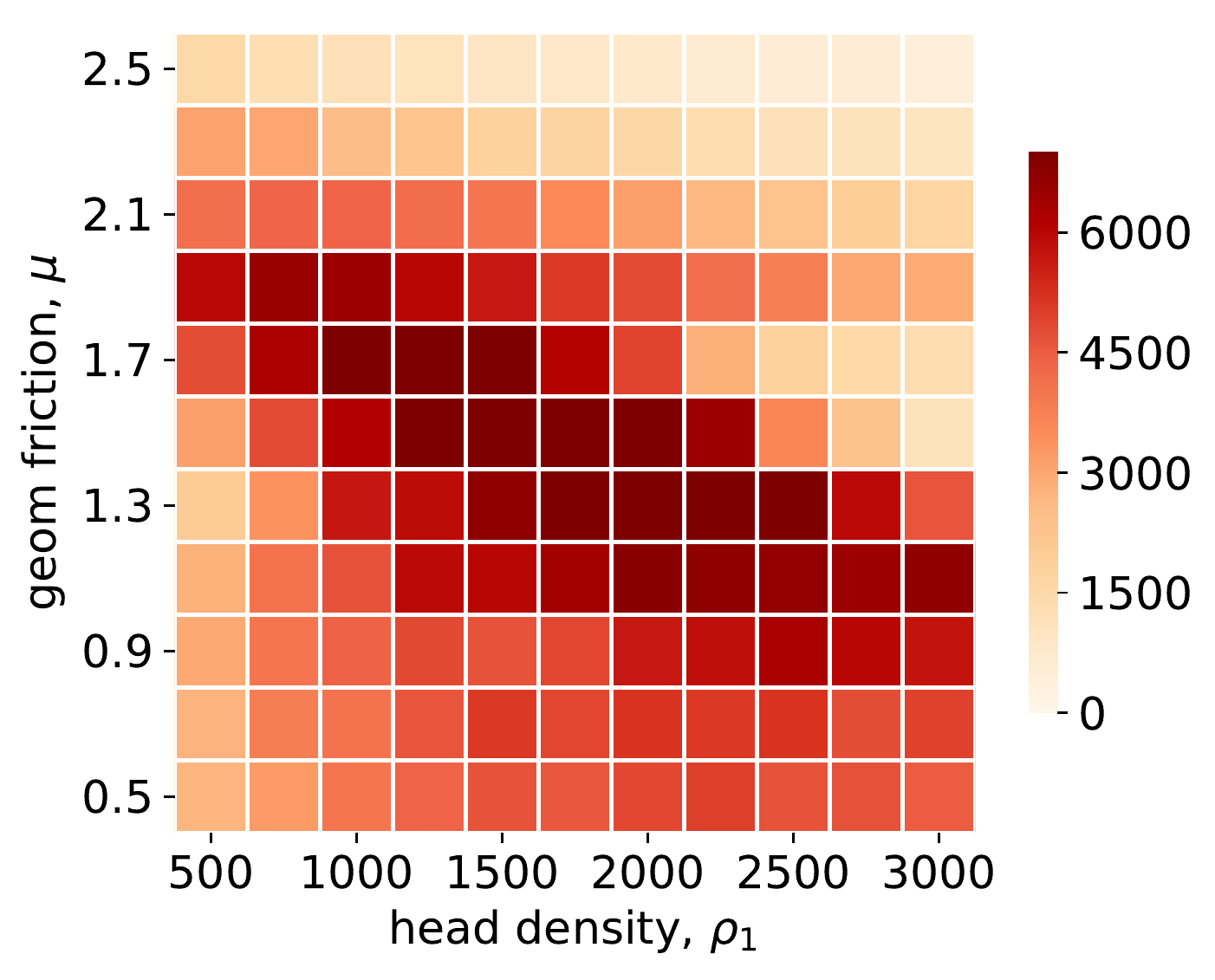}
         \caption{Cheetah - $\epsilon=0.0$}          
    \end{subfigure}
    \hfill \hfill \hfill    
    \begin{subfigure}[b]{0.3\textwidth}
         \centering
         \includegraphics[width=\textwidth]{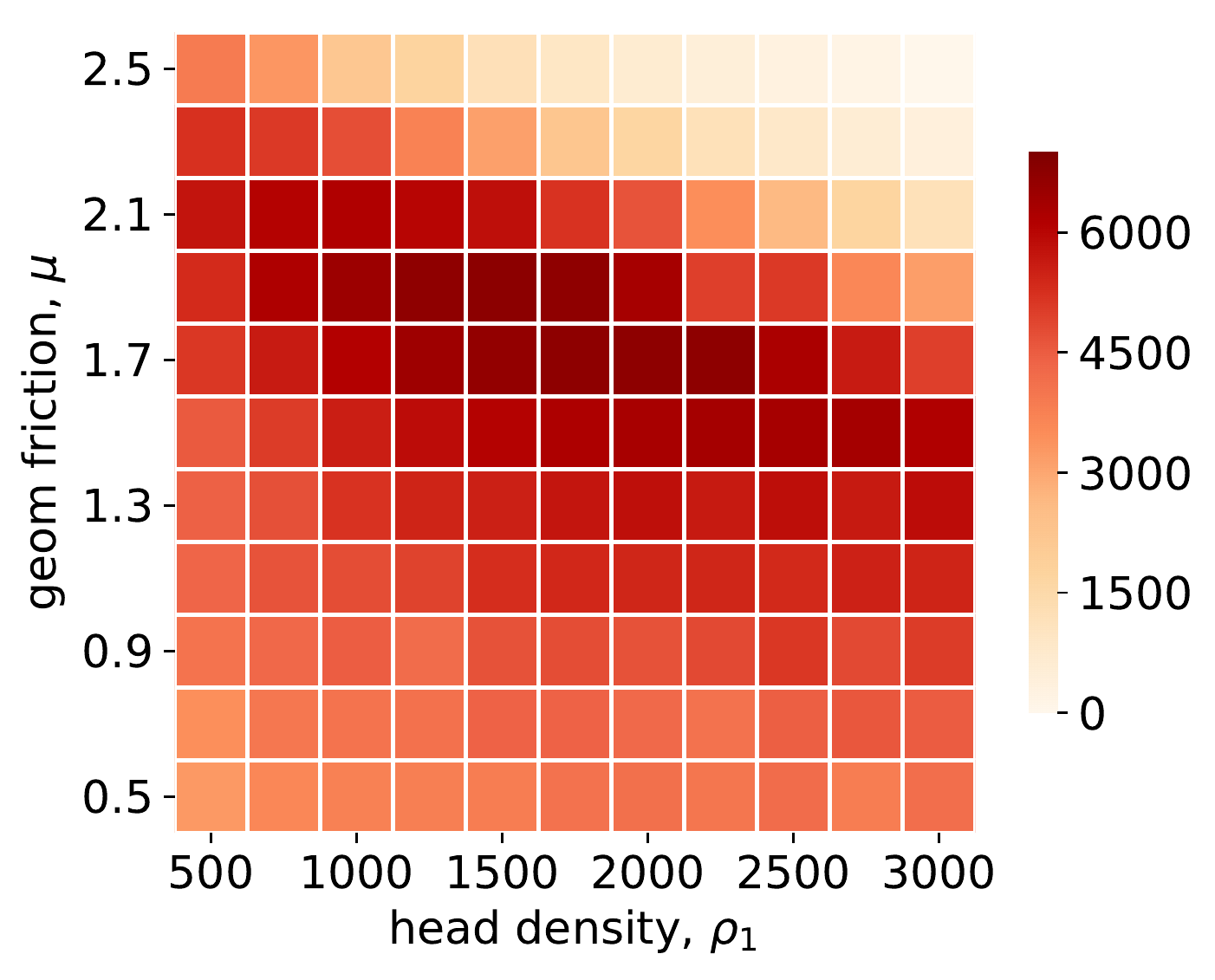}
    \caption{Cheetah - $\epsilon=0.005$}          
    \end{subfigure}
    \hfill \hfill 
    \begin{subfigure}[b]{0.3\textwidth}
         \centering
         \includegraphics[width=\textwidth]{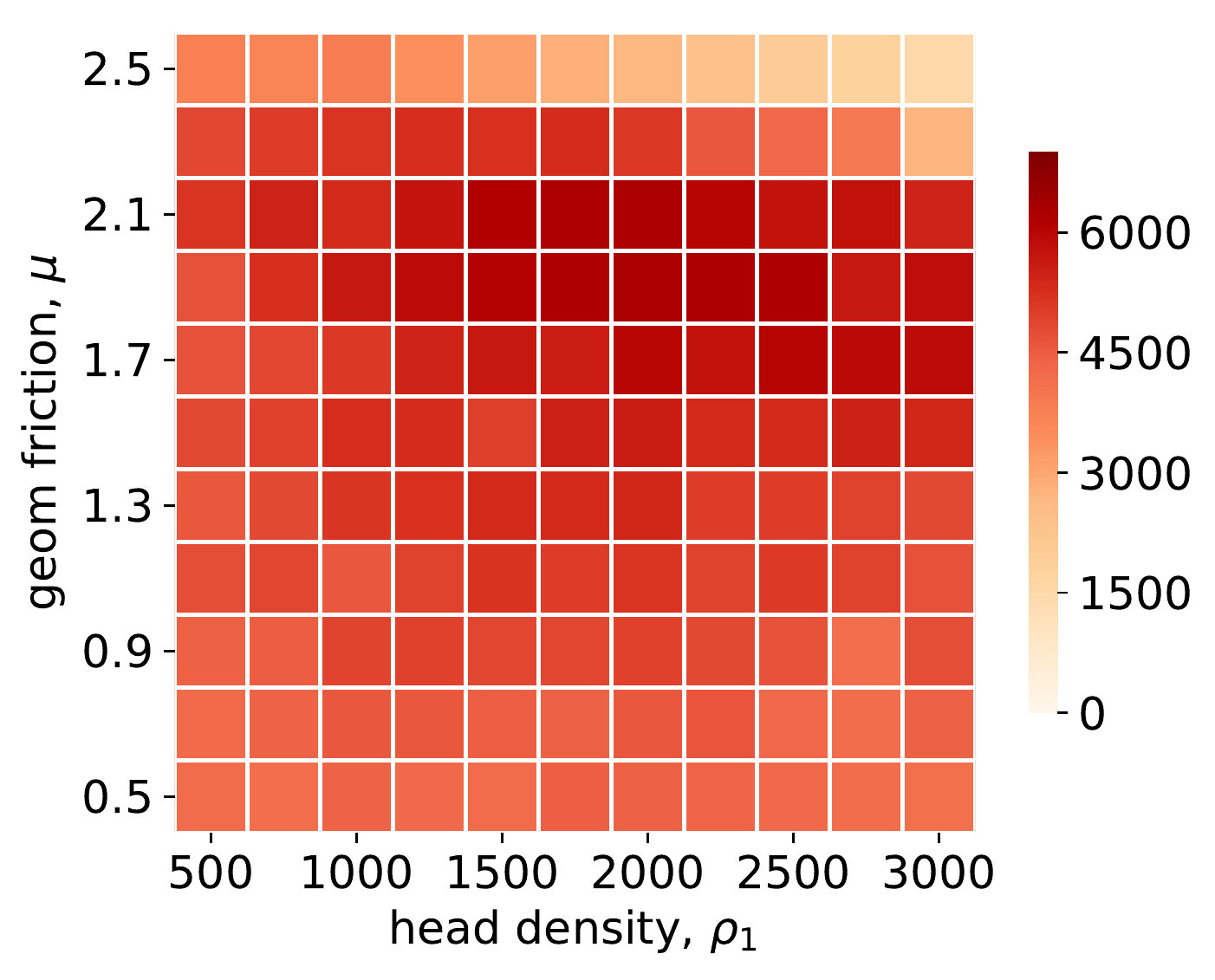}
    \caption{Cheetah - $\epsilon=0.03$}          
    \end{subfigure}
    
    \caption{Results on various benchmarks. Top row represents Hopper results, middle is concerned with Walker, and bottom denotes HalfCheetah. These graphs depict test returns as a function of changes in dynamical parameters and Wasserstein distance. These graphs again show that $\text{W}\text{R}^{2}\text{L}$ outperforms PPO (i.e., when $\epsilon =0$) and that its robustness improves as $\epsilon$ increases.}
    \label{fig:exp_2d}
\end{figure}

Furthermore, one would expect that such advantages increase with the increase in the radius $\epsilon$. To validate these claims, we re-ran the same experiment devised above while allowing for a larger $\epsilon$ of $0.015$. It is clear from Figure 3(c) that the robustness range of the policy generated by $\text{W}\text{R}^{2}\text{L}$ does increase with the increase in the ball's radius. 

These results were also verified in two additional benchmarks (i.e., the Walker and HalfCheetah). Here, again our results demonstrate that when 2 dimensional changes are considered, our method outperforms state-of-the-art significantly. We also arrive at the same conclusion that if $\epsilon$ increases so does the robustness range. For instance, a robust policy trained with an $\epsilon$ of 0.03 achieves high average test returns on a broader range of HalfCheetahs compares to that with an $\epsilon = 0.005$, see Figures 4 (h) and 4 (i). 

This, in turn, takes us to the following conclusion: 

\begin{tcolorbox}[enhanced, arc=0pt,outer arc=0pt, colback=white, colframe=black, drop shadow={black,opacity=1}]
\textbf{Experimental Conclusion II:} From the above, we conclude that $\text{W}\text{R}^{2}\text{L}$ outperforms others when two-dimensional simulator variations are considered, and that robustness increase with $\epsilon$.
\end{tcolorbox}

\paragraph{Results with High-Dimensional Model Variation:}  Though results above demonstrate robustness, an argument against a min-max objective can be made especially when only considering low-dimensional changes in the simulator. Namely, one can argue the need for such an objective as opposed to simply sampling a set of systems and determining policies performing-well on average similar to the approach proposed in~\citep{rajeswaran2016epopt}.

\begin{figure}[h!]
     \centering
     \begin{subfigure}[b]{0.32\textwidth}
         \centering
         \includegraphics[width=\textwidth]{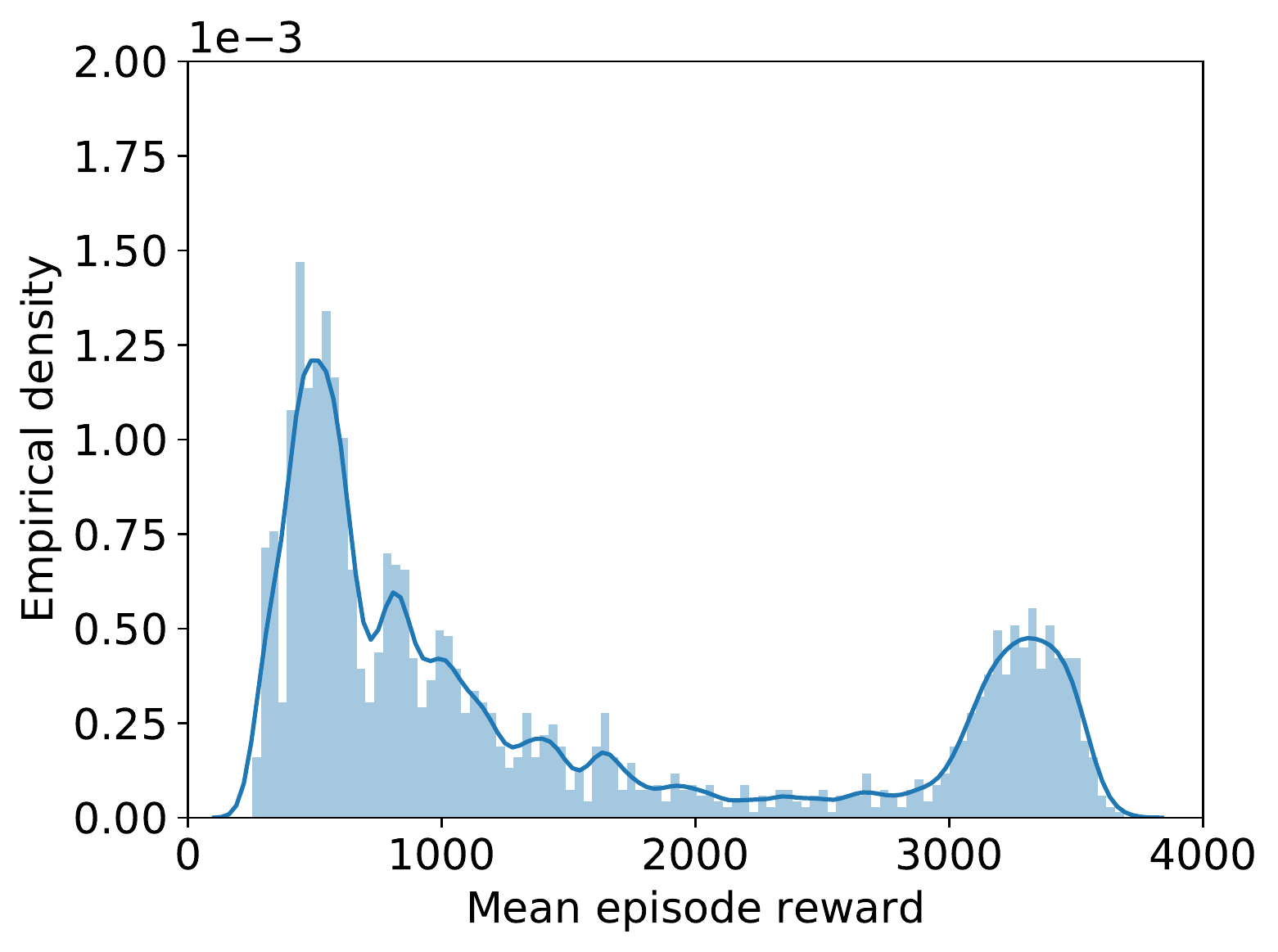}
         \caption{PPO High-D Var. - HP}
     \end{subfigure}
     \hfill
     \begin{subfigure}[b]{0.32\textwidth}
         \centering
         \includegraphics[width=\textwidth]{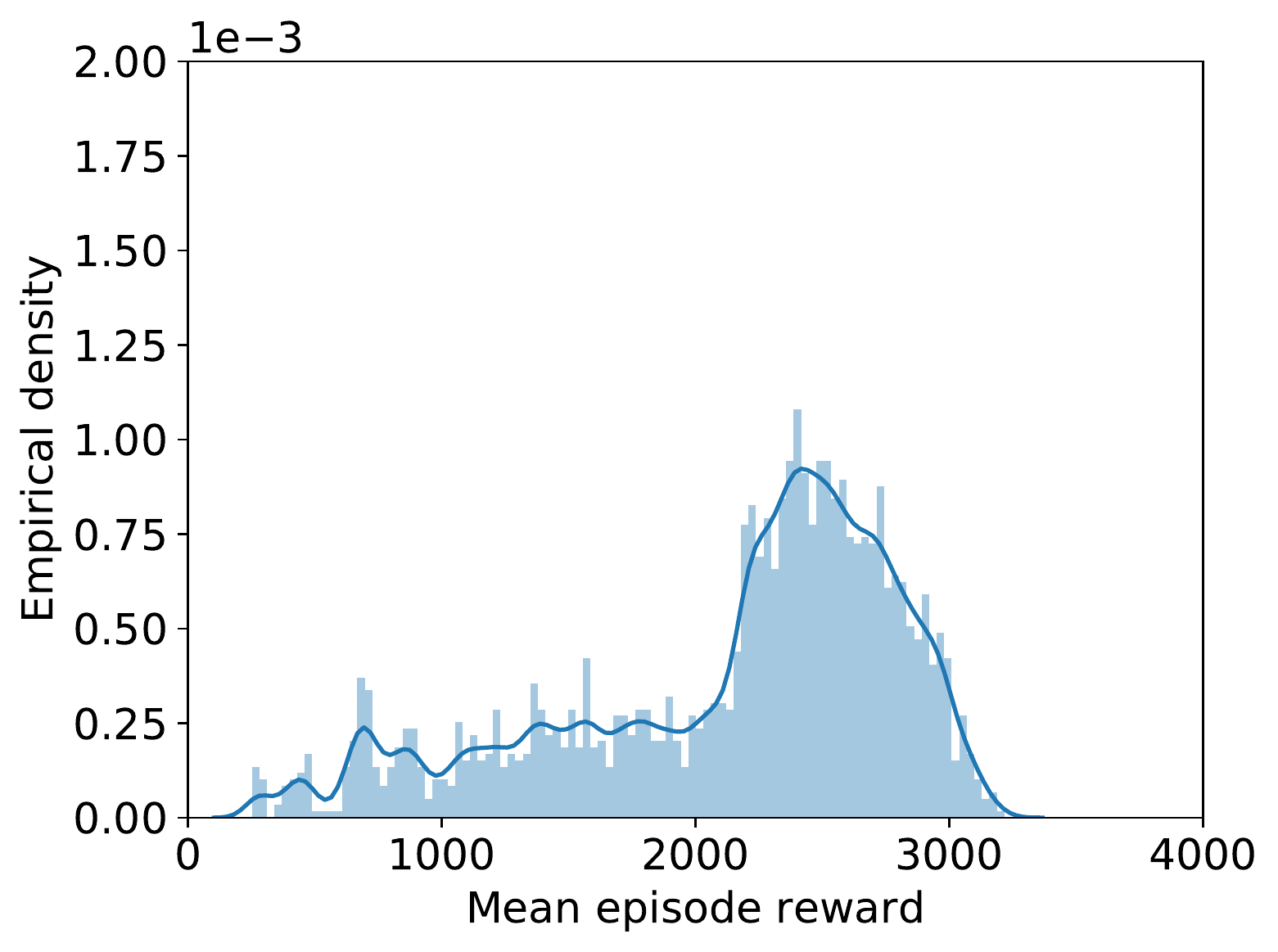}
         \caption{Train Low Test High - HP}
     \end{subfigure}
     \hfill
     \begin{subfigure}[b]{0.32\textwidth}
         \centering
         \includegraphics[width=\textwidth]{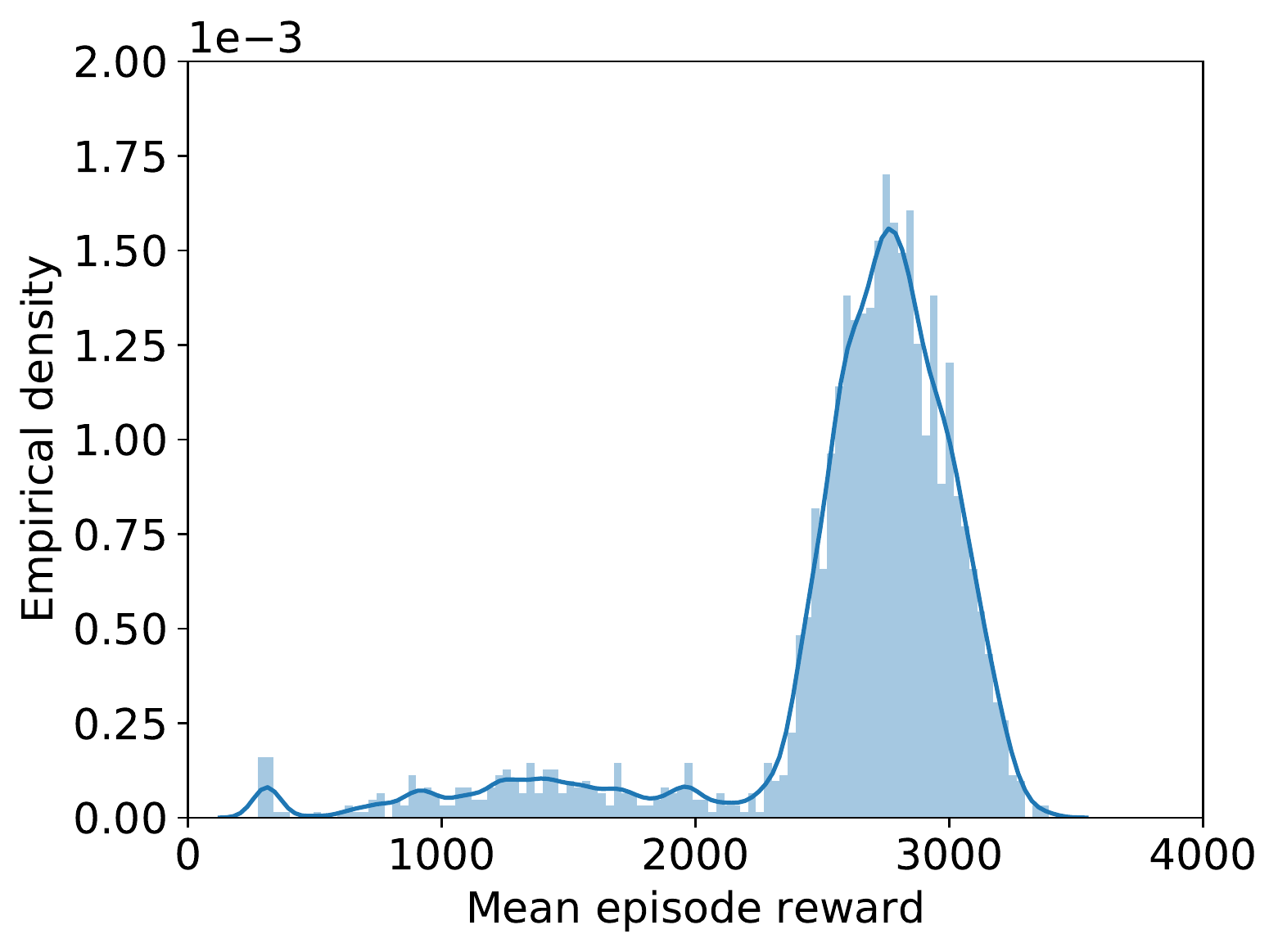}
         \caption{Train High Test High - HP}
     \end{subfigure}
    \begin{subfigure}[b]{0.32\textwidth}
         \centering
         \includegraphics[width=\textwidth]{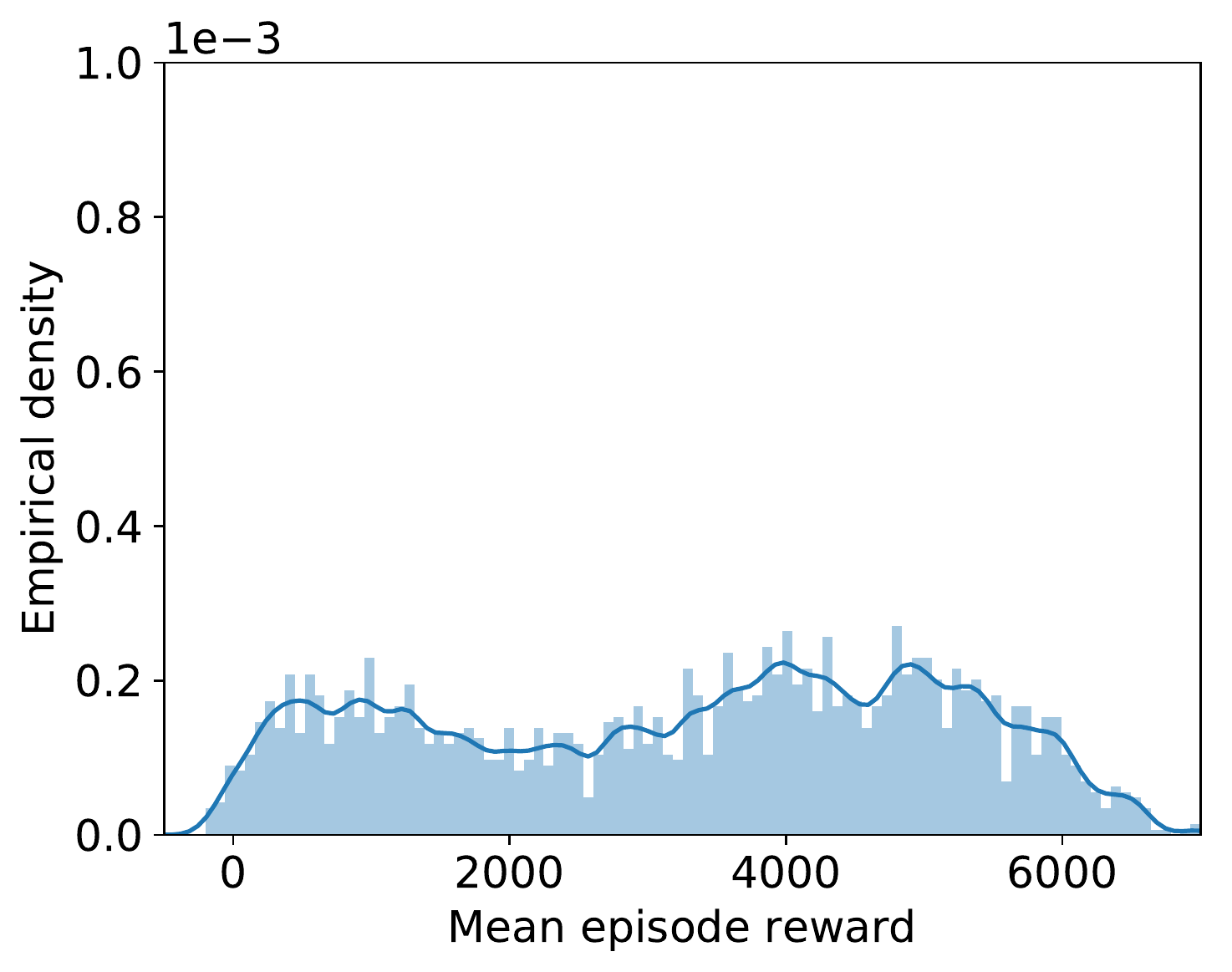}
     \caption{PPO High-D Var.- HC}
     \end{subfigure} 
     \hfill 
         \begin{subfigure}[b]{0.32\textwidth}
         \centering
         \includegraphics[width=\textwidth]{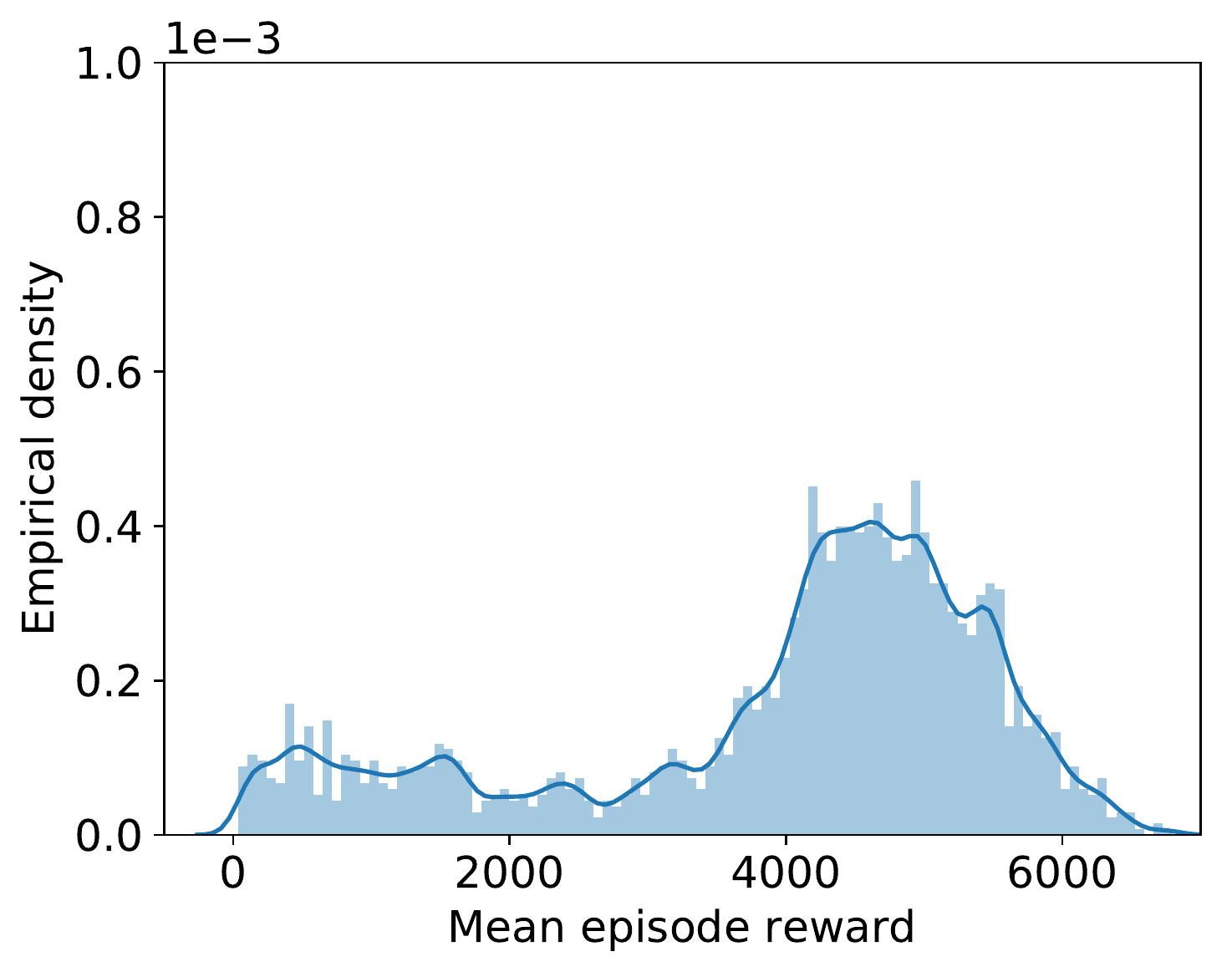}
     \caption{Train Low Test High - HC}
     \end{subfigure} 
     \hfill
    \begin{subfigure}[b]{0.32\textwidth}
         \centering
         \includegraphics[width=\textwidth]{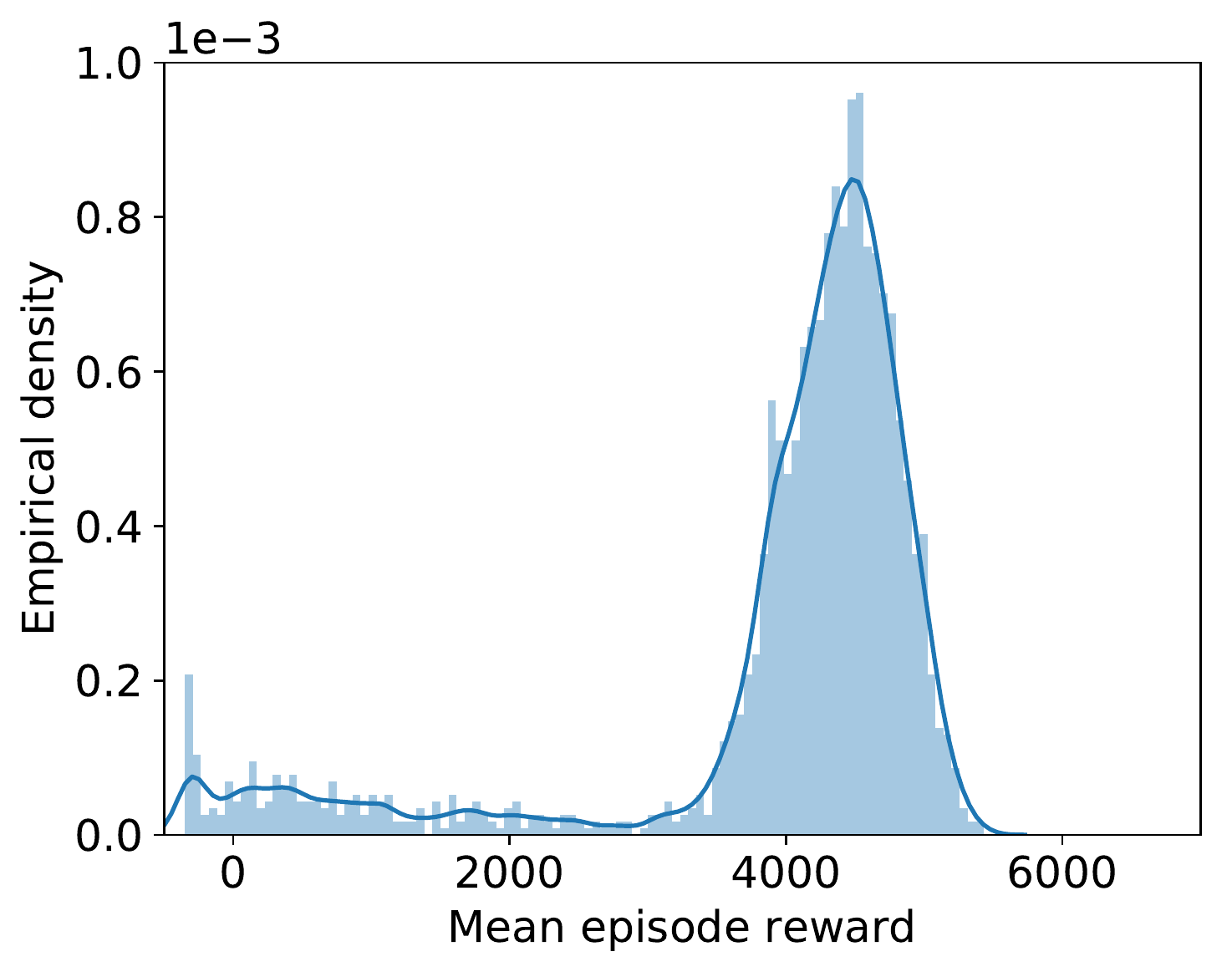}
     \caption{Train High Test High - HC}
     \end{subfigure} 
    \caption{Results evaluating performance when considering high-dimensional variations on the hopper (HP - top row) and HalfCheetah (HC - bottom row) environment. All figures show the empirical distribution of returns on 1,000 testing systems. Figure (a) demonstrates the robustness of PPO. Figure (b) reports empirical test returns of $\text{W}\text{R}^{2}\text{L}$'s policy trained on only two parameter changes (e.g., friction and density) of the environment but tested on systems with all high-dimensional dynamical parameters modified. Figure (c) trains and tests $\text{W}\text{R}^{2}\text{L}$ altering all dimensional parameters of the simulator. Clearly, our method exhibits robustness even if high-dimensional variations were considered.}
    \label{fig:exp_11d}
\end{figure}

A counter-argument to the above is that a gradient-based optimisation scheme is more efficient than a sampling-based one when high-dimensional changes are considered. In other words, a sampling procedure is hardly applicable when more than a few parameters are altered, while $\text{W}\text{R}^{2}\text{L}$ can remain suitable. To assess these claims, we conducted two additional experiments on the Hopper and HalfCheetah benchmarks. In the first, we trained robustly while changing friction and torso densities, and tested on 1000 systems generated by varying all 11 dimensions of the Hopper dynamics, and 21 dimensions of the HalfCheetah system. Results reported in Figures~\ref{fig:exp_11d}(b) and (e) demonstrate that the empirical densities of the average test returns are mostly centered around 3000 for the Hopper, and around 4500 for the Cheetah, which improves that of PPO (Figures~\ref{fig:exp_11d}(a) and (d)) with return masses mostly accumulated at around 1000 in the case of the Hopper and almost equally distributed when considering HalfCheetah.

Such improvements, however, can be an artifact of the careful choice of the low-dimensional degrees of freedom allowed to be modified during Phase I of Algorithm~\ref{Algo:Main}. To get further insights,
Figures~\ref{fig:exp_11d}(c) and (f) demonstrate the effectiveness of our method trained and tested while allowing to tune all 11 dimensional parameters of the Hopper simulator, and the 21 dimensions of the HalfCheetah. Indeed, our results are in accordance with these of the previous experiment depicting that most of the test returns' mass remains around 3000 for the Hopper, and improves to accumulate around 4500 for the HalfCheetah. Interestingly, however, our algorithm is now capable of acquiring higher returns on all systems\footnote{Please note that we attempted to compare against~\cite{rajeswaran2016epopt}. Due to the lack of open-source code, we were not able to regenerate their results. } since it is allowed to alter all parameters defining the simulator. As such, we conclude:

\begin{tcolorbox}[enhanced, arc=0pt,outer arc=0pt, colback=white, colframe=black, drop shadow={black,opacity=1}]
\textbf{Experimental Conclusion III:} From the above, we conclude that $\text{W}\text{R}^{2}\text{L}$ outperforms others when high-dimensional simulator variations are considered.
\end{tcolorbox}

\section{Conclusion \& Future Work}\label{Sec:Conc_Ftr_work}
In this paper, we proposed a novel robust reinforcement learning algorithm capable of outperforming others in terms of test returns on unseen dynamical systems. Our algorithm formalises a new min-max objective with Wasserstein constraints for policies generalising across varying domains, and considers a zero-order method for scalable solutions.  Empirically, we demonstrated superior performance against state-of-the-art from both standard and robust reinforcement learning on low and high-dimensional MuJuCo environments. 

There are a lot of interesting directions we plan to target in the future. First, we aim to consider robustness in terms of other components of MDPs, e.g., state representations, reward functions, and others. Second, we will implement $\text{W}\text{R}^{2}\text{L}$ on real hardware, considering sim-to-real experiments. 

\section{Acknowledgements}
We are grateful to Jan Peters and Andreas Krause for the interesting discussions that helped better-shape this paper. Moreover, we would like to thank each of Rasul Tutunov, and Aivar Sootla for guiding us through discussion on optimisation and control theory. Finally, we thank Jun Yao, Liu Wulong, Chen Zhitang, Jia Zheng, and Zhengguo Li for helping us improve this work with inputs about real-world considerations.

\appendix
\section{Deep Deterministic Policy Gradients Results}\label{App:InvBar}
As mentioned in the experiments section of the main paper, we refrained from presenting results involving deep deterministic policy gradients (DDPG) due to its lack in robustness even on simple systems, such as the CartPole. 
\begin{figure}[h!]
 \centering
 \includegraphics[scale=.6, trim = {0em 0em 0em 0em}, clip]{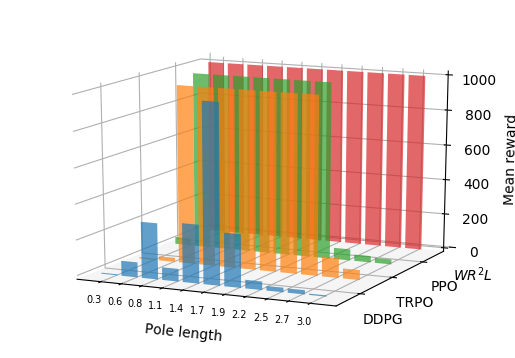}
\caption{Robustness results on the inverted pendulum demonstrating that our method outperforms state-of-the-art in terms of average test returns and that DDPG lacks in robustness performance.}
\label{fig:myfigf}
\end{figure}

Figure~\ref{fig:myfigf} depicts these results showing that DDPG lacks robustness even when minor variations in the pole length are introduced. TRPO and PPO, on the other hand, demonstrate an acceptable performance retaining a test return of 1,000 across a broad range of pole lengths variations. 

\section{Derivation of the Closed Form Solution}
In Section~\ref{Sec:Sol} we presented a closed form solution to the following optimisation problem: 
\begin{equation*}
    \min_{\bm{\phi}} \nabla_{\bm{\phi}} \mathbb{E}_{\bm{\tau}\sim p_{\bm{\theta}}^{\bm{\phi}}(\bm{\tau})}\left[\mathcal{R}_{\text{total}}(\bm{\tau})\right] \Bigg|^{\mathsf{T}}_{\bm{\theta}^{[k]}, \bm{\phi}^{[j]}}(\bm{\phi} - \bm{\phi}^{[j]}) \ \ \text{s.t.} \ \ \frac{1}{2} (\bm{\phi} - \bm{\phi}_{0})^{\mathsf{T}}\bm{H}_{0}(\bm{\phi} - \bm{\phi}_{0})\leq \epsilon, 
\end{equation*}
which took the form of: 
\begin{align*}
    \bm{\phi}^{[j+1]} = \bm{\phi}_{0} - \sqrt{\frac{2\epsilon}{\bm{g}^{[k,j] \mathsf{T}}\bm{H}_{0}^{-1}\bm{g}^{[k,j]}}} \bm{H}_{0}^{-1}\bm{g}^{[k,j]}.
\end{align*}
In this section of the appendix, we derive such an update rule from first principles. We commence transforming the constraint optimisation problem into an unconstrained one using the Lagrangian: 
\begin{equation*}
    \mathcal{L}(\bm{\phi}, \bm{\lambda}) = \bm{g}^{[k,j], \mathsf{T}}\left(\bm{\phi}-\bm{\phi}^{[j]}\right)  + \bm{\lambda}\left[\frac{1}{2} (\bm{\phi} - \bm{\phi}_{0})^{\mathsf{T}}\bm{H}_{0}(\bm{\phi} - \bm{\phi}_{0}) -\epsilon \right],
\end{equation*}
where $\bm{\lambda}$ is a Lagrange multiplier, and $\bm{g}^{[k,j]} = \nabla_{\bm{\phi}} \mathbb{E}_{\bm{\tau}\sim p_{\bm{\theta}}^{\bm{\phi}}(\bm{\tau})}\left[\mathcal{R}_{\text{total}}(\bm{\tau})\right]\Bigg|^{\mathsf{T}}_{\bm{\theta}^{[k]}, \bm{\phi}^{[j]}}$. 

Deriving the Lagrangian with respect to the primal parameters $\bm{\phi}$, we write: 
\begin{align}
\label{Eq:BlaBlo}
    \nabla_{\bm{\phi}} \mathcal{L}(\bm{\phi}, \bm{\lambda}) = \bm{g}^{[k,j] \mathsf{T}} + \bm{\lambda}(\bm{\phi} - \bm{\phi}_{0})^{\mathsf{T}}\bm{H}_{0}. 
\end{align}
Setting Equation~\ref{Eq:BlaBlo} to zero and solving for primal parameters, we attain: 
\begin{equation*}
    \bm{\phi} = \bm{\phi}_{0} - \frac{1}{\bm{\lambda}}\bm{H}_{0}^{-1}\bm{g}^{[k,j]}. 
\end{equation*}
Plugging $\bm{\phi}$ back into the equation representing the constraints, we derive: 
\begin{align*}
    \left(\bm{\phi}_{0} - \frac{1}{\bm{\lambda}}\bm{H}_{0}^{-1}\bm{g}^{[k,j]} - \bm{\phi}_{0}\right)^{\mathsf{T}} \bm{H}_{0}\left(\bm{\phi}_{0} - \frac{1}{\bm{\lambda}}\bm{H}_{0}^{-1}\bm{g}^{[k,j]} - \bm{\phi}_{0}\right)=2\epsilon \implies \bm{\lambda}^{2} = \frac{1}{2\epsilon} \bm{g}^{[k,j]\mathsf{T}}\bm{H}_{0}^{-1}\bm{g}^{[k,j]}.
\end{align*}
It is easy to see that with the positive solution for $\bm{\lambda}$, the Karush–Kuhn–Tucker (KKT) conditions are satisfied. Since the objective and constraint are both convex, the KKT conditions are sufficient and necessary for optimality, thus finalising our derivation.

\bibliographystyle{apalike}
\bibliography{ref}

\end{document}